\let\proof\relax
\let\endproof\relax
\newtheorem{thm}{Theorem}[section]
\newtheorem{prop}[thm]{Proposition}
\newtheorem{rem}[thm]{Remark}
\newtheorem{ass}[thm]{Assumption}
\let\classAND\AND
\let\AND\relax
\let\AND\classAND
\DeclareSymbolFont{symbolsC}{U}{pxsyc}{m}{n}
\DeclareMathSymbol{\coloneqq}{\mathrel}{symbolsC}{"42}
\newcommand{\vertiii}[1]{{\left\vert\kern-0.25ex\left\vert\kern-0.25ex\left\vert
		#1 
		\right\vert\kern-0.25ex\right\vert\kern-0.25ex\right\vert}}
\newcommand*{\vecbf}[1]{\mathbf{#1}} 
\newcommand*{\gbf}[1]{\bm{#1}} 
\newcommand*{\myset}[1]{\mathcal{#1}} 
\newcommand*{\lrangle}[1]{\langle #1 \rangle} 
\newcommand*{\aug}[1]{\hat{#1}} 
\newcommand*{\maxop}[1]{\bar{#1}} 
\newcommand*{\minop}[1]{\underline{#1}} 
\newcommand{\init}[1]{{#1}^{\text{in}}} 
\newcommand{\en}[1]{{#1}^{\text{end}}} 
\newcommand{\back}[1]{{#1}_{\text{bac}}} 
\newcommand{\front}[1]{{#1}_{\text{fro}}} 
\newcommand{\overbar}[1]{\mkern 1.5mu\overline{\mkern-1.5mu#1\mkern-1.5mu}\mkern 1.5mu}
\newcommand{\trajorder}{q} 
\newcommand{\numcon}{\gamma} 
\newcommand{\textornot}[1]{{#1}} 
\newcommand{\etal}{\textit{et al.}}
\newcommand{\Obset}{\mathbf{O}}
\newcommand{\bigO}{\mathcal{O}}
\newcolumntype{M}[1]{>{\centering\arraybackslash}m{#1}}
\newcommand{\cb}{\color{black}}
\newcommand{\posctrlpt}{\gbf{\beta}}
\newcommand{\ctrlptbspline}{\beta}
\DeclareSymbolFont{textsymbols}{TS1}{\familydefault}{m}{n}
\DeclareMathSymbol{\ulq}{\mathopen}{textsymbols}{39}
\def\BibTeX{{\rm B\kern-.05em{\sc i\kern-.025em b}\kern-.08em
    T\kern-.1667em\lower.7ex\hbox{E}\kern-.125emX}}
\begin{document}
\title{NEPTUNE: Nonentangling {\cb Trajectory} Planning for Multiple Tethered Unmanned Vehicles}
\author{Muqing~Cao, Kun~Cao, Shenghai~Yuan, Thien-Minh~Nguyen, 
and~Lihua~Xie,~\IEEEmembership{Fellow,~IEEE}
\thanks{The authors are with School of Electrical and Electronic Engineering, Nanyang Technological University, 50 Nanyang Avenue, Singapore 639798
(e-mail: caom0006@e.ntu.edu.sg, kun001@e.ntu.edu.sg, shyuan@ntu.edu.sg, thienminh.nguyen@ntu.edu.sg, elhxie@ntu.edu.sg).
}
\thanks{Corresponding author: Lihua Xie.}
\thanks{This paper has supplementary downloadable multimedia material.}
}

\markboth{}%
{How to Use the IEEEtran \LaTeX \ Templates}

\maketitle

\begin{abstract}
\textcolor{black}{Despite recent progress in trajectory planning for multiple robots and a single tethered robot, trajectory planning for multiple tethered robots to reach their individual targets without entanglements remains a challenging problem.}
In this paper, a complete approach is presented to address this problem.
{\color{black}First, a multi-robot tether-aware representation of homotopy is proposed to efficiently evaluate the feasibility and safety of a potential path in terms of}
(1) the cable length required to reach a target following the path, and (2) the risk of entanglements with the cables of other robots.
Then the proposed representation is applied in a decentralized and online planning framework, which includes a graph-based kinodynamic trajectory finder and an optimization-based trajectory refinement, to generate entanglement-free, collision-free, and dynamically feasible trajectories.
{\color{black}The efficiency of the proposed homotopy representation is compared against the existing single and multiple tethered robot planning approaches.}
{\color{black}Simulations with up to 8 UAVs show the
effectiveness of the approach in entanglement prevention and its
real-time capabilities.}
Flight experiments using 3 tethered UAVs verify the practicality of the presented approach.
{\color{black}The software implementation is publicly available online\footnote{https://github.com/caomuqing/neptune}.}
\end{abstract}

\begin{IEEEkeywords}
Tethered Robot Planning, Multi-Robot, Trajectory Planning.
\end{IEEEkeywords}

\section{Introduction}\label{sec:intro}
\IEEEPARstart{U}{nmanned} vehicles such as unmanned aerial vehicles (UAVs), unmanned ground vehicles (UGVs) and unmanned surface vehicles (USVs) have been widely adopted in industrial applications due to reduced safety hazards for humans and potential cost saving \cite{doi:10.1142/S2301385020500089, GALCERAN20131258}. Tethered systems are commonly employed to extend the working duration, enhance the communication quality and prevent the loss of unmanned vehicles. For autonomous tethered robots, {\color{black}it is important to consider the risk of the tether being entangled with the surroundings,} which will limit the reachable space of the robots and even cause damage. 

In this work, we consider the trajectory planning problem for multiple tethered robots in a known workspace with static obstacles. 
{\color{black}Each robot is attached to one end of a slack and flexible cable that is allowed to lie on the ground.} 
The other end of the cable is attached to a fixed base station. 
{\color{black}The cable has a low-friction surface so that it can slide over the surface of static obstacles or other cables.
Entanglement occurs when the movement of at least one of the robots is restricted due to the physical interactions among the cables.
In the scenario shown in Figure \ref{fig: entangle1g}, two ground robots' cables cross each other.}
If the robots continue to move in the directions indicated by the arrows, {\color{black}the cables will be stressed, thus affecting the movement of at least one of the robots.
Such a situation is more likely to occur} when more robots operate in the same workspace.
\begin{figure}[!t]
\centering
\includegraphics[width=0.8\linewidth]{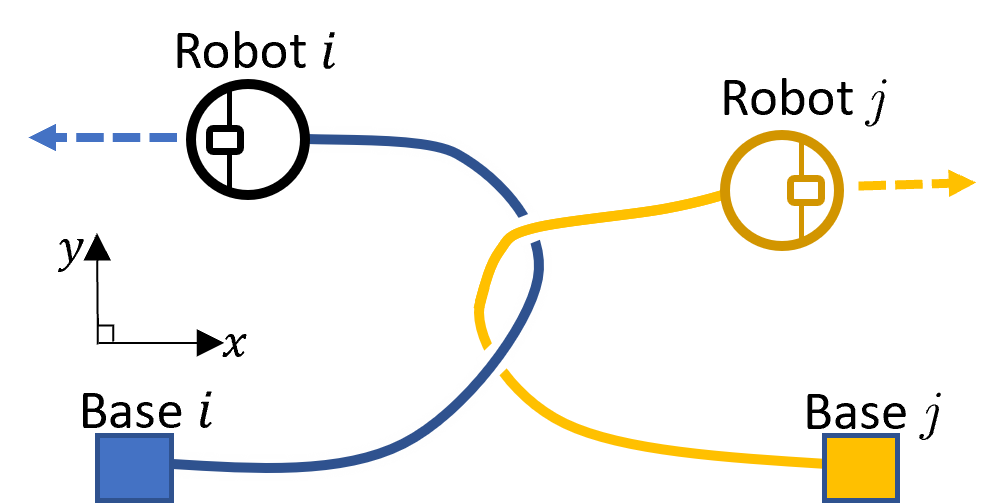}
\caption{\footnotesize Top-down view of a workspace to illustrate an entanglement situation. Note the Z-order of the cables (shown as blue and yellow curves) at their two intersections.}
\label{fig: entangle1g}
\end{figure}

{\color{black}While there exists abundant literature on multi-robot path and trajectory planning, 
they are not applicable to tethered multi-robot scenarios \cite{liu2018towards,zhou2021ego,zhou2021decentralized,wang2021geometrically,tordesillas2021mader}. 
Most of the studies on the tethered robot planning problem focus on the single robot case and use a representation of homotopy to identify the path or cable configuration \cite{kim2014path,kim2015path,salzman2015optimal,Aine2016}.}
Feasible paths are found by searching in a graph augmented with the homotopy classes of the paths.
However, the existing representation of homotopy lacks the capability of representing the interaction of multiple mobile robots efficiently. {\color{black}Moreover, slow graph expansion requires offline construction of the graph prior to online planning.
Existing studies on path planning for multiple tethered robots present centralized and offline approaches without taking static obstacles into account \cite{sinden1990tethered,hert1996ties,hert1999motion,zhang2019planning}.}

In this work, we present NEPTUNE,
a decentralized and online trajectory generation framework for 
\underline{n}on-\underline{e}ntangling trajectory \underline{p}lanning for multiple \underline{t}ethered \underline{un}manned v\underline{e}hicles.
First, we propose a novel multi-robot tether-aware representation of homotopy that 
encodes the interaction among the cables of the planning robot and the collaborating robots, and the static obstacles.
{\color{black} By this representation, the risk of entanglement with other robots and static obstacles is efficiently evaluated.
Furthermore, the proposed representation enables efficient determination of the reachability of a destination under the given tether length constraints.
}
{\color{black}The trajectory planning consists of a front-end trajectory search module as well as a back-end optimization module.} The front end searches for a feasible, collision-free, non-entangling, and goal-reaching polynomial trajectory, using kinodynamic A* in a graph augmented with the introduced multi-robot tether-aware representation.
The back-end trajectory optimization refines the first few segments of the feasible trajectory to generate a trajectory with lower control effort while still satisfying the non-collision and non-entangling requirements.
Each robot generates its own trajectory in a decentralized and asynchronous manner and broadcasts the future trajectory through a local network for others to access.

{\color{black}To the best of our knowledge, NEPTUNE is the first online and decentralized trajectory planner for multiple tethered robots in an obstacle-ridden environment.}
The main contributions of this paper are summarized as follows:
\begin{itemize}
	\item {\color{black}A detailed procedure to obtain a multi-robot tether-aware representation of homotopy is presented}, which enables efficient checks on the risk of entanglement, as well as efficient computation of the required cable length to reach a target.
	\item A complete tether-aware planning framework is presented consisting of a kinodynamic trajectory finder and a trajectory optimizer. 
	\item {\color{black}Comparisons with the existing approaches for tethered robot path planning in single-robot obstacle-rich and multi-robot obstacle-free environments demonstrate significant improvements in computation time.}
	\item {\color{black}Simulations using $2$ to $8$ robots in an obstacle-ridden environment reveal an average computation time of less than $70$ ms and a high mission success rate.}
	\item Flight experiments using three UAVs verify the practicality of the approach. {\color{black}We make the software implementation publicly available for the benefit of the community}.
\end{itemize}

{\color{black}The type of mobile robot considered in this paper is UAV. However,} the presented approach is also applicable to other types of vehicles such as UGV and USV.

\section{Related Works}\label{sec:related}
\subsection{Tethered Robot Path Planning}
Interestingly, {\color{black}most of the early works on the tethered robot planning problem focus on multiple robots rather than a single robot.} Sinden \cite{sinden1990tethered} investigated the scheduling of tethered robots to visit a set of pre-defined locations in turns so that none of the cables crosses each other during the motion of the robots. A bipartite graph is constructed, with colored edges representing the ordered cable configurations.
In \cite{hert1996ties,hert1999motion}, the authors addressed the path planning for tethered robots with specified target cable configurations. A directed graph is used to represent the motion constraints and the ordering of movements. 
The output is a piece-wise linear path for each robot and waiting time at some specified location.
Zhang \etal \cite{zhang2019planning} extended the results of \cite{hert1996ties} by providing an analysis of a more efficient motion profile where all robots move straight and concurrently.
These works are very different from our work in terms of problem formulation and approach: (1) {\cb they consider taut cables forming straight lines between robots and bases, whereas we consider slack cables that are allowed to slide over one another;} (2) static obstacles are not taken into account in these works; (3) their approaches are offline and centralized while our work presents a decentralized online approach; (4) the outputs of these algorithms are piece-wise linear paths whereas our approach provides dynamically feasible trajectories.

The development of planning algorithms for a single tethered robot typically focuses on navigating the robot around obstacles to reach a goal while satisfying the cable length constraint.
Early work \cite{xavier1999shortest} and its recent derivative \cite{brass2015shortest} find the shortest paths in a known polygonal environment by tracing back along the previous path to look for turning points in a visibility graph-like construction.
Recent developments of homotopic path planning using graph-search techniques \cite{igarashi2010homotopic,bhattacharya2012topological,hernandez2015comparison,bhattacharya2018path} provide the foundation for a series of new works on tethered robot path planning.
Particularly, Kim \etal \cite{kim2014path} use a homotopy invariant (h-signature) to determine the homotopy classes of paths by constructing a word for each path (Section \ref{sec: prelim}).
{\color{black}A homotopy augmented graph is built with the graph nodes carrying both a geometric location and the homotopy class of a path leading to the location.} Then graph search techniques can be applied to find the optimal path subject to grid resolution.
Kim \etal \cite{kim2015path} and Salzman \etal \cite{salzman2015optimal} improve the graph search and graph building processes of \cite{kim2014path} respectively through applying a multi-heuristic A* \cite{Aine2016} search algorithm and replacing the grid-based graph with a visibility graph. McCammon \etal \cite{mccammon2017planning} extend the results to a multi-point routing problem.
The homotopy invariant in these works, which is for a 2-D static environment, is insufficient to represent the complex interactions when multiple tethered robots are involved (a justification will be provided in Section \ref{subsec: homotopy}).
Furthermore, these works use a curve shortening technique 
to determine whether an expanded node satisfies the cable length constraint, 
which is computationally expensive and leads to slow graph expansion.
It is thus a common practice to construct the augmented graph in advance.
Bhattacharya \etal \cite{bhattacharya2018path} proposed a homotopy invariant for multi-robot coordination, which can be potentially applied to the centralized planning for tethered multi-robot tasks.
However, {\color{black}considering the high dimensions of the graph and the complexity related to identifying homotopy equivalent classes (the word problem), it is time-consuming to build a graph even for a simple case} (more than 30s for $3$ robots in a $7\times7$ grid).
{\color{black}Compared with these works, we develop an efficient representation of homotopy for multiple tethered robots, thus facilitating the computation of the required cable length. Therefore, the graph expansion and search can be executed online for on-demand targets.}
 Teshnizi \etal \cite{Teshnizi2014computing} proposed an online decomposition of workspace into a graph of cells for single-robot path searching. 
A new cell is created when an event of cable-cable crossing or cable-obstacle contact is detected.
However, infinite friction between cable surfaces is assumed, which deviates from the practical cable model.
Furthermore, a large number of cells can be expected during the graph search in an obstacle-rich environment. {\color{black}The reason is that a new cell is created for each visible vertex while no heuristics are provided for choosing the candidate cells.}

{\cb Several studies have been recently performed which }leverage Braid groups to characterize the topological patterns of robots' trajectories and facilitate the trajectory planning for multiple robots \cite{Diaz2017multirobot, Mavrogiannis2020, mavrogiannis2022analyzing}.
Despite promising results, 
these works have not been extended to the trajectory planning for tethered robots.

\subsection{Decentralized Multi-robot trajectory planning}
{\cb Most of the existing works on the tethered robot planning problem generate piece-wise linear paths.} In this section, we review some smooth trajectory generation techniques for decentralized multi-robot planning, {\cb while focusing on methods applied to UAVs.}
In general, decentralized multi-robot trajectory generation includes synchronous and asynchronous methods.
Synchronous methods such as \cite{chen2015decoupled,luis2019trajectory} require trajectories to be generated at the same planning horizon for all robots, whereas asynchronous methods do not have such a restriction and hence are more suitable for online application.
In \cite{liu2018towards}, the authors presented a search-based multi-UAV trajectory planning method, where candidate polynomial trajectories are generated by applying discretized control inputs. {\cb Trajectories that violate the collision constraint (resulting in a non-empty intersection between robots' polygonal shapes) are discarded.}
In \cite{zhou2021ego}, the collision-free requirement is enforced as a penalty function in the overall objective function of the nonlinear optimization problem.
In \cite{zhou2021decentralized}, a similar approach is taken, but a newly developed trajectory representation \cite{wang2021geometrically} is employed (as compared to B-spline in \cite{zhou2021ego}) {\cb which enables concurrent optimization of both the spatial and the temporal parameters.}
In \cite{tordesillas2021mader}, robots' trajectories are converted into convex hulls. The collision-free constraint is guaranteed by optimizing a set of planes separating the convex hulls of the collaborating robots and those of the planning robot.
To save computational resources and ensure short-term safety, 
an intermediate goal is chosen, which is the closest point to the goal within a planning radius.

{\color{black}As revealed by the comparisons in \cite{tordesillas2021mader,luis2019trajectory}}, centralized and offline trajectory generation approaches typically require much longer computation time to generate a feasible solution than the online decentralized approaches, and the difference grows with the number of robots involved.
Furthermore, the absence of online replanning means that the robots cannot handle in-flight events, such as the addition of a new robot into the fleet or the appearance of non-cooperative agents.
{Hence, \cb it is believed that a decentralized planner with online replanning is more suitable for practical applications. 
Furthermore,} the framework presented in this paper has the flexibility to integrate new features such as the prediction and avoidance of dynamic obstacles and non-cooperative agents.
Similar to \cite{tordesillas2021mader}, our approach uses an asynchronous planning strategy with a convex-hull representation of trajectories. 
{\color{black}However, the greedy strategy used by \cite{tordesillas2021mader} results in inefficient trajectories. The reason is that an intermediate goal may be selected near a large non-traversable region, where the robot has to make abrupt maneuvers to avoid the obstacles.
In contrast, our approach selects the intermediate goals that are derived from a feasible, goal-reaching, and efficient (based on some evaluation criteria such as length) trajectory generated by our front-end kinodynamic trajectory finder.}

\section{Preliminaries}\label{sec: prelim}
\subsection{Notation}
In this paper, $\|\vecbf{x}\|$ denotes the $2$-norm of vector $\vb{x} \in \mathbb{R}^{n}$, $\vb{x}^{(i)}$ denotes the $i$-th order derivative of vector $\vb{x}$ and $\myset{I}_{n}$ denotes the set consisting of integers $1$ to $n$, i.e., $\myset{I}_{n} = \{1,\dots,n\}$.
The notation frequently used in this paper is shown in Table \ref{tab: notation}.
More symbols will be introduced in the paper.
\begin{table}[htbp]

\def\arraystretch{1.1}
\caption{Notation}
\begin{tabular}{| M{0.19\linewidth}| m{0.702\linewidth} |}
\hline
Symbol                                                                           & Meaning                                                                                                                                                                                                                                                                          \\ \hline
n                                                                                & Number of robots.                                                                                                                                                                                                                                                                \\ \hline
m                                                                                & Number of static obstacles.                                                                                                                                                                                                                                                      \\ \hline

$\aug{\mathcal{W}}$, $\mathcal{W}$                                               & 3-dimensional workspace and its 2-D projection, i.e., $\aug{\mathcal{W}}=\{(x,y,z)|(x,y)\in{\mathcal{W}}\}$.                                                                                                                                                                     \\ \hline
{$\aug{\vb{p}}_i$, $\vb{p}_i$}                                                   & Position of robot $i$ and its 2-D projection, i.e., $\aug{\vb{p}}_i=[\vb{p}_i, p^{z}_{i}]^\top\in\mathbb{R}^3$,  $\vb{p}_i=[p_{i}^{x}, p_{i}^{y}]^\top\in\mathbb{R}^2$.                                                                                                          \\ \hline
{$\aug{\vb{p}}^{\text{term}}_i$}, $\aug{\vb{p}}^{\text{inter}}$                  & Terminal goal and intermediate goal position, $\aug{\vb{p}}^{\text{term}}_i=[p^{\text{term},\text{x}}_i,p^{\text{term},\text{y}}_i,p^{\text{term},\text{z}}_i]^\top\in\mathbb{R}^3$.                                                                                             \\ \hline
$\init{\aug{\vb{p}}}$, $\init{\aug{\vb{v}}}$, $\init{\aug{\vb{a}}}$              & The position, velocity and acceleration of the robot at time $\init{t}$, $\in\mathbb{R}^3$.                                                                                                                                                                                       \\ \hline
$\mathcal{P}{(}k{)}$                                                             & Set of robots' 2-D positions at time $k$, i.e. $\mathcal{P}(k)\coloneqq\{\vb{p}_j(k)|j\in \myset{I}_n\}$.                                                                                                                                                                        \\ \hline
$\mathcal{P}_{j,l}$                                                              & Set consisting of the positions of robot $j$ at discretized times, $\mathcal{P}_{j,l}\coloneqq\{\vb{p}_j(k)|t\in[\init{t}+lT, \init{t}+(l+\frac{1}{\sigma})T,\dots, \init{t}+(l+1)T]\}$                                                                                          \\ \hline
$\init{t}$                                                                       & The start time of the planned trajectory, referenced to a common clock.                                                                                                                                                                                                          \\ \hline
{$\maxop{\vb{v}}_i$, $\minop{\vb{v}}_i$, $\maxop{\vb{a}}_i$, $\minop{\vb{a}}_i$} & Upper and lower bounds of velocity and acceleration.                                                                                                                                                                                                                             \\ \hline
{$O_j$}                                                                          & 2-D projection of the obstacle $j$.                                                                                                                                                                                                                                              \\ \hline
{$\Obset$}                                                                       & Set of obstacles, i.e., $\Obset\coloneqq\{O_j|j\in \myset{I}_m\}$.                                                                                                                                                                                                               \\ \hline
{$\vb{b}_i$}                                                                     & {\cb 2-D base position of robot $i$}, $\vb{b}_i=[b_i^{x},b_i^{y}]^\top$.                                                                                                                                                                                                                            \\ \hline
{$\phi_i$}                                                                       & Cable length of robot $i$.                                                                                                                                                                                                                                                       \\ \hline
{$\vb{c}_{i,l}(t)$}                                                              & $l$-th contact point of robot $i$ at time $t$, $\in\mathbb{R}^2$.                                                                                                                                                                                                                 \\ \hline
{$\numcon_i(t)$}                                                                 & Number of contact points of robot $i$ at time $t$, $\in\mathbb{Z}$.                                                                                                                                                                                                              \\ \hline
$\mathcal{C}_i(k)$                                                               & List of contact points of robot $i$ at time $k$, i.e. $\mathcal{C}_i(k)=\{\vb{c}_{i,f}(k)|f\in\myset{I}_{\numcon_i(k)}\}$.                                                                                                                                                       \\ \hline
{\cb $\trajorder$}                                                                          & Order of polynomial trajectory.                                                                                                                                                                                                                                                  \\ \hline
$\aug{\mathbf{E}}_l$, $\mathbf{E}_l$                                             & $\aug{\mathbf{E}}_l\in\mathbb{R}^{(\trajorder+1)\times3}$ consists of the coefficients of a 3-dimensional $\trajorder$-th order polynomial. ${\mathbf{E}}_l\in\mathbb{R}^{(\trajorder+1)\times2}$ consists of the coefficients for the X and Y dimensions.                       \\ \hline
$\vb{g}(t)$                                                                      & The monomial basis, $\vb{g}(t)=[1, t, \dots,t^\trajorder]^\top$.                                                                                                                                                                                                                 \\ \hline
$o_{j,0}$, $o_{j,1}$                                                             & Virtual segments of obstacle $j$. {\cb The second number in the subscript indicates the index of the segment.}                                                                                                                                                                                                                                           \\ \hline
$r_{j,0}$, $r_{j,1}$                                                             & Virtual segments of robot $j$. {\cb $r_{j,1}$ represents the simplified cable configuration and $r_{j,0}$ is the extended segment from robot $j$.}                                                                                                                                                                                                                                              \\ \hline
$\gbf{\zeta}_{j,0}$, $\gbf{\zeta}_{j,1}$                                         & The two points on the surface of $O_j$ that define $o_{j,0}$ and $o_{j,1}$, respectively.                                                                                                                                                                                                          \\ \hline
$h_i(k)$                                                                           & {\cb The homotopy representation of robot $i$, expressed as a word. The subscript is omitted when no ambiguity arises.}                                                                                                                                                                                                                  \\ \hline
size$(h(k))$                                                                     & The total number of entries (letters) in the word $h(k)$.                                                                                                                                                                                                                        \\ \hline
{\cb $\text{entry}(h(k),l)$}                                                                 & The $l$-th entry in $h(k)$.                                                                                                                                                                                                                                                      \\ \hline
{\cb index$(\vb{c})$}                                                                 & {\cb The position of the obstacle entry in $h(k)$ whose surface point is $\vb{c}$, i.e., if $\vb{c}=\gbf{\zeta}_{j,f}$ for a particular $j\in\myset{I}_m$ and $f\in\{0,1\}$, then $\text{entry}(h(k),\text{index}(\vb{c}))= \textornot{o}_{j,f}$.}                                                          \\ \hline \rule{0pt}{0.3cm}
$\overbar{\vb{a}\vb{b}}$                                                         & Line segment bounded by points $\vb{a}$ and $\vb{b}$.                                                                                                                                                                                                                            \\ \hline \rule{0pt}{0.4cm}
$\overleftrightarrow{\vb{a} \vb{b}}$                                             & Line passing through $\vb{a}$ and $\vb{b}$.                                                                                                                                                                                                                                      \\ \hline
$\diamond$                                                                       & Concatenation operation.                                                                                                                                                                                                                                                         \\ \hline
$\maxop{u}$                                                                      & The maximum magnitude of control input, $>0$.                                                                                                                                                                                                                                    \\ \hline
$\sigma_\text{u}$                                                                & A user-chosen parameter such that $2\sigma_\text{u}+1$ is the number of sampled control inputs for each axis in the kinodynamic search.                                                                                                                                                                            \\ \hline
$\sigma$                                                                         & Number of discretized points for evaluating the trajectory of each planning interval.                                                                                                                                                                                            \\ \hline
$T$                                                                              & Duration of each piece of trajectory.                                                                                                                                                                                                                                            \\ \hline
$\front{\eta}$, $\back{\eta}$, $\maxop{\eta}$                                    & $\front{\eta}$ is the number of polynomial curves in the front-end output trajectory, $\back{\eta}$ is the number of polynomial curves to be optimized in the back end. $\maxop{\eta}$ is a user-chosen parameter such that $\back{\eta}=\text{min}(\front{\eta},\maxop{\eta})$. \\ \hline
{\cb $\mathcal{Q}_{l}$, $\mathcal{V}_{l}$,  $\mathcal{A}_{l}$   }                                                            & {\cb Sets of 2-D position, velocity, and acceleration control points for a trajectory with index $l$.        }                                                                                                                                                                                                          \\ \hline
{\cb $\aug{\mathcal{Q}}_{l}$, $\aug{\mathcal{V}}_{l}$, $\aug{\mathcal{A}}_{l}$} & {\cb Sets of 3-D position, velocity, and acceleration control points for a trajectory with index $l$.} \\ \hline
{\cb $\posctrlpt$, $\vb{v}$, $\vb{a}$} & {\cb The position, velocity, and acceleration control points, respectively. }\\ \hline
\end{tabular}
\label{tab: notation}
\end{table}

\subsection{Homotopy and Shortest Homotopic Path}\label{subsec: homotopy}
\begin{figure}[!t]
\centering
\includegraphics[scale=0.5]{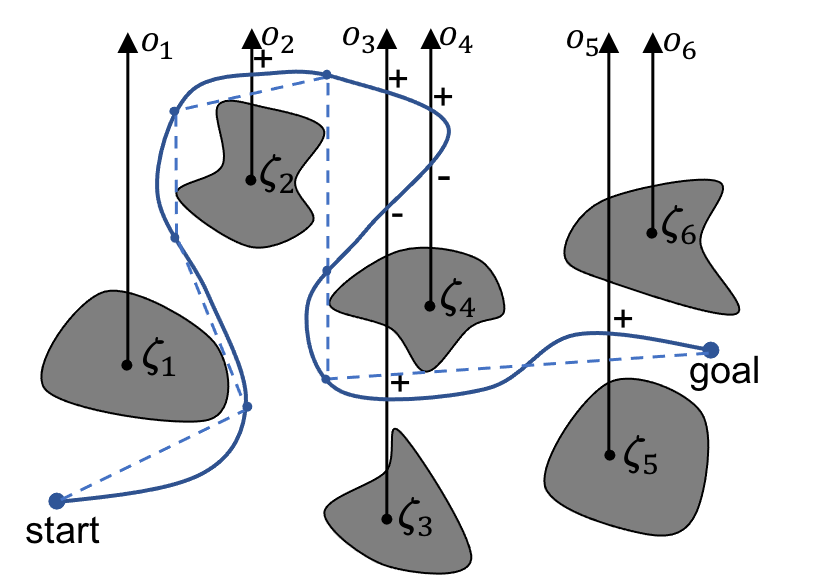}
\caption{\footnotesize 
Example of generating a homotopy invariant (h-signature) of a curve:
the solid blue path has an initial word of $ o_2o_3o_4o_4^-o_3^-o_3o_5$ which can be reduced to $ o_2o_3o_5$.
The dashed blue line is the shortened homotopy-equivalent path to the original path using the curve shortening technique in \cite{kim2014path}. {\cb It also represents the shortened cable configuration of a robot following the solid blue path, with the base coinciding with the start point.}}
\label{fig:hsig}
\end{figure}
We briefly review the concepts related to homotopy. Consider a workspace of arbitrary dimension consisting of obstacles. {\cb A path or a cable in the workspace can be represented as a curve.} Two curves in this workspace, sharing the same start and end points, are \emph{homotopic} (or belong to the same \emph{homotopy class}) if and only if one can be continuously deformed into another without traversing any obstacles.

A \emph{homotopy invariant} is a representation of homotopy that uniquely identifies the homotopy class of a curve.
In a 2-dimensional workspace consisting of $m$ obstacles, there exists a standard procedure to compute a homotopy invariant \cite{Allen2002Algebraic} (Figure \ref{fig:hsig}).
{\cb First, a set of non-intersecting rays $o_1,o_2,\dots,o_m$ are constructed, each emitting from a reference point $\gbf{\zeta}_i$ in each obstacle.
Then, a representation (word) is obtained by tracing the curve from the start to the goal and adding the corresponding letters of the crossed rays. 
Right-to-left crossings are distinguished by a superscript `-1'.}
Then, the word is reduced by canceling consecutive crossings of the same ray with opposite crossing directions.
This reduced word is called the h-signature and is a homotopy invariant. {\cb Indeed,} two curves with the same start and goal belong to the same homotopy class if and only if they have the same h-signature.

The h-signature introduced above records ray-crossing events that indicate robot-obstacle interactions. However, it is not capable of recording key events of cable-cable interaction for identifying entanglements.
{\cb To illustrate this, an h-signature is obtained for the path taken by each robot in Figure \ref{fig: entangle1g}. 
Each robot has taken a path that crosses the ray emitted from the other robot exactly once.}
{\cb Hence, the resulting h-signature records only one ray-crossing event revealing no potential entanglements.}

A tethered robot following a path from its base to a goal should have its cable configuration homotopic to the path.
Hence, a shorter homotopy-equivalent curve can be computed to approximate the cable configuration and calculate the required cable length to reach the goal. 
In \cite{kim2014path, kim2015path, salzman2015optimal}, a common curve shortening technique is used, 
which requires tracing back the original path to obtain a list of turning points,  i.e., points on the shortened path where the path changes direction. 
{\cb Each of these points' visual line of sight to its previous point does not intersect any obstacles.}
An example of this shortened curve is shown in Figure \ref{fig:hsig}.

\section{Formulation and Overview}\label{sec: formulation}
In this work, we consider a 3-dimensional, simply connected, and bounded workspace with constant vertical limits, 
$\aug{\mathcal{W}}=\{(x,y,z)|(x,y)\in{\mathcal{W}},z\in[\minop{z},\maxop{z}]\}$,
where 
$\maxop{z}$ and $\minop{z}$ are the vertical bounds.
The workspace contains $m$ obstacles, whose 2-D projections are denoted as $O_1, \dots,O_m$.
Consider a team of $n$ robots. 
Each robot $i$ is connected to a cable of length $\phi_i$,
and one end of the cable is attached to a base fixed on the ground at $\vb{b}_i$.
{\cb We reasonably assume that the bases are placed at the boundary of the workspace, not affecting the movements of the robots or the cables.}
As we mentioned in Section \ref{sec:intro}, we consider flexible and slack cables pulled toward the ground by gravity. 
In the case of a UGV, {\cb the cable lies entirely on the ground}, and in the case of a tethered UAV, the part of the cable near the UAV is lifted in the air.
{\cb A robot is allowed to cross (move over) the cables of other robots, resulting in its cable sliding over the cables of others.
The robots are not permitted to move directly on top of an obstacle, 
as this creates ambiguous cable configurations, 
where we cannot determine if the cable of a robot stays on an obstacle or falls on the ground (even if we know that the cable falls, we cannot determine which side of the obstacle it falls to).}
Such a restriction allows us to express most of the interaction of cables in the 2-D plane, regardless of the type of robots involved.

Our task is to compute trajectories for each robot in the team to reach its target $\aug{\vb{p}}^{\text{term}}_i$.
{\color{black}The trajectories should be continuous up to acceleration to prevent aggressive attitude changes when controlling the UAVs}, satisfying the velocity and acceleration constraints $\maxop{\vb{v}}_i$, $\minop{\vb{v}}_i$, $\maxop{\vb{a}}_i$, $\minop{\vb{a}}_i\in\mathbb{R}^3$, and free of collision and entanglements.

\begin{figure}[!t]
\centering
\includegraphics[width=\linewidth]{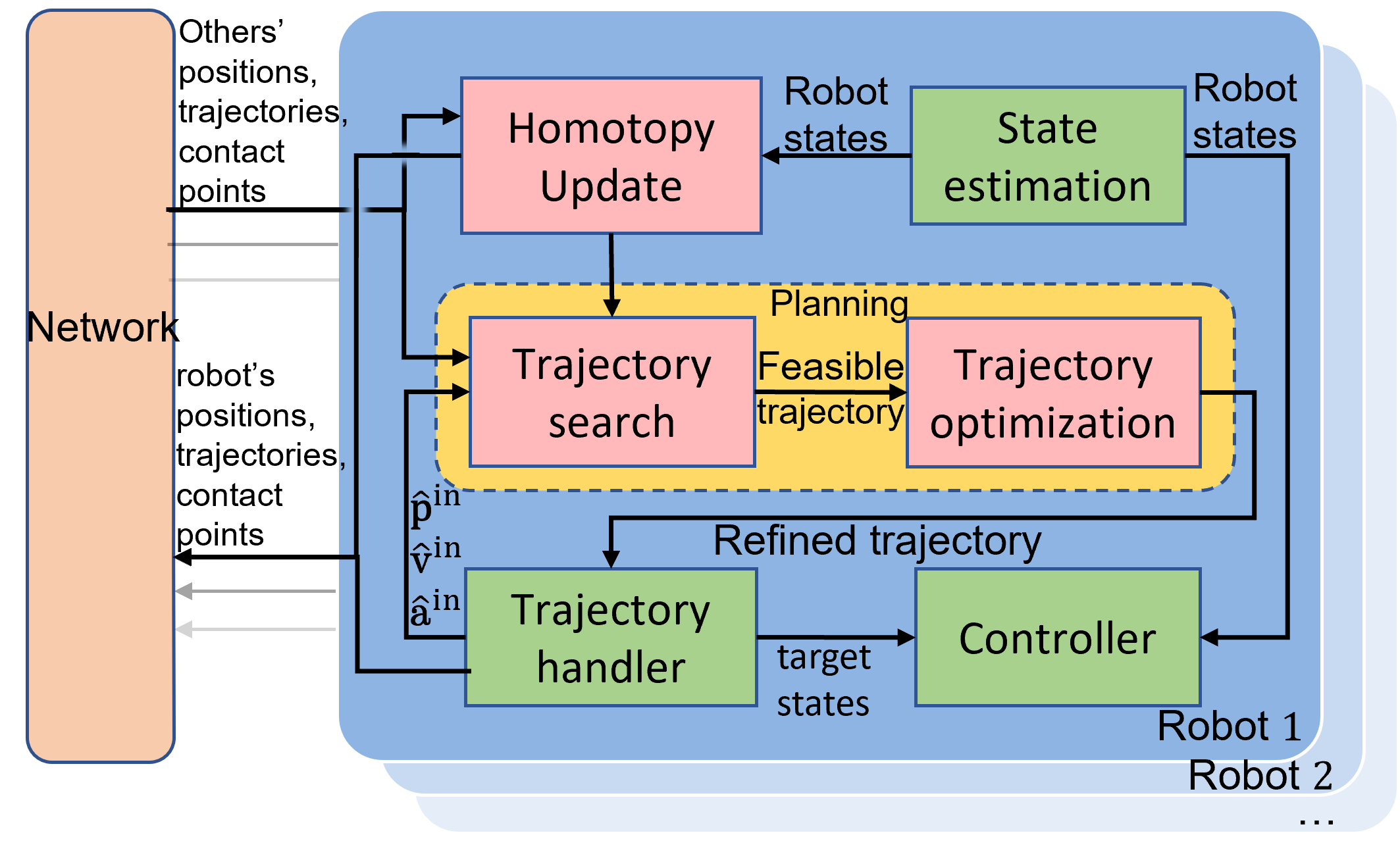}
\caption{\footnotesize 
Overview of the system. The blocks in red are the core modules, for which detailed explanations will be provided.
{\cb The state estimation module takes inputs from sensors such as IMUs, cameras, and Lidars}, which are not shown to save space. The controller module generates commands for the actuators of the robot that are also omitted in this diagram.}
\label{fig:flowchart}
\end{figure}
Figure \ref{fig:flowchart} illustrates the core components of our approach.
The robots share a communication network through which they can exchange information.
For each robot $i$, several modules are run concurrently. {\cb The Homotopy Update module maintains an updated topological status of the robot based on its current position and other robots' latest information.}
The output of this module is a representation of homotopy in the form of a word, and a list of contact points
$\vb{c}_{i,1}(t),\vb{c}_{i,2}(t),\dots,\vb{c}_{i,\numcon_i(t)}(t)\in\mathbb{R}^2$, where $\numcon_i(t)\in\mathbb{Z}$ is the number of contact points at time $t$.
The contact points are the estimated positions where the robot's cable touches the obstacles if the cable is fully stretched.
Similar to the turning points described in Section \ref{sec: prelim}, contact points form a shorter homotopy-equivalent cable configuration.
The procedure of updating the representation and the determination of contact points will be detailed in Section \ref{sec: homotopy}.
\begin{figure}[!t]
\centering
\includegraphics[width=\linewidth]{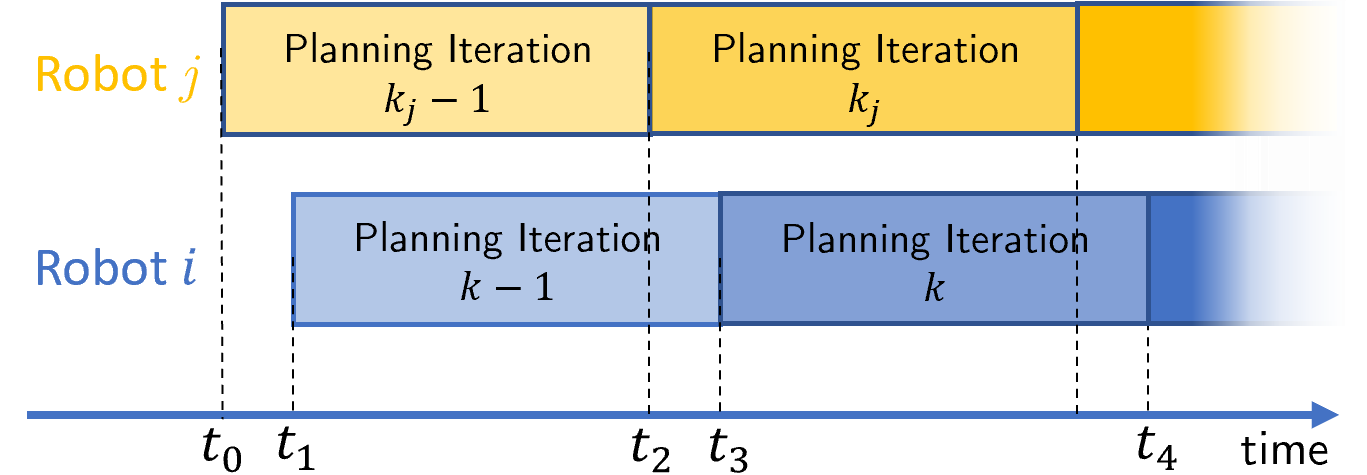}
\caption{\footnotesize 
An illustration of decentralized and asynchronous planning using a common clock time. From time $t_0$ to $t_2$, robot $j$ computes its trajectory for iteration $k_j-1$ (we use the subscript $j$ in $k_j$ to differentiate the iteration number of robot $j$ from that of robot $i$, as they do not share the same number of planning iterations). At time $t_2$, robot $j$ finishes computing its trajectory and publishes it to the communication network. {\cb Between $t_2$ and $t_3$}, robot $i$ receives robot $j$’s updated trajectory ({\color{black} the times of publishing and receiving are different due to a communication delay}), then it is able to compute an updated trajectory in iteration $k$ that avoids the future trajectories of robot $j$.
}
\label{fig:asyplanning}
\end{figure}

The trajectory handler stores the future trajectory of the robot, in the form of $\eta_i$ pieces of polynomial curves, each expressed as polynomial coefficients $\aug{\mathbf{E}}_l\in\mathbb{R}^{(\trajorder+1)\times3}$ for a 3-dimensional $\trajorder$-th order polynomial.
Hence, the future position of the robot at time $t$ can be predicted as
\begin{align}
    \aug{\vb{p}}(t)=\aug{\mathbf{E}}_l^\top \vb{g}(t-t_{i,l}), \,\forall t\in[t_{i,l},t_{i,l+1}],\,l=0\dots\eta_i-1,\nonumber
\end{align}
where $t_{i,0}$ is the current time, $[t_{i,l},t_{i,l+1}]$ is the valid period of the trajectory.
{\color{black}Each robot plans its trajectories iteratively and online,}
i.e., new trajectories are generated while the robot is moving toward the goal.
Each planning iteration is asynchronous with the planning iterations of other robots, as illustrated in Figure \ref{fig:asyplanning}.
At every planning iteration, the robot computes a trajectory starting at time $\init{t}$, which is ahead of the robot's current time by the estimated computation time for the trajectory.
The input to the planning module includes the updated topological status from the Homotopy Update module, the initial states of the robot, $\init{\aug{\vb{p}}}$, $\init{\aug{\vb{v}}}$, $\init{\aug{\vb{a}}}$ at time $\init{t}$, 
and the future trajectories of other robots.
The planning module includes a front-end trajectory finder and a back-end trajectory optimizer, which are described in detail in Sections \ref{sec: search} and \ref{sec: opt}, respectively. 
The output trajectory will replace the existing trajectory in the trajectory handler starting from time $\init{t}$.

At every iteration, robot $i$ broadcasts the following information to the network: (1) its current position $\aug{\vb{p}}_{i}(t)$,
(2) its future trajectory in the form of polynomial coefficients, 
and (3) its current list of contact points
$\vb{c}_{i,1}(t),\dots,\vb{c}_{i,\numcon_i(t)}(t)$.
This message will be received by all other robots $j\in\myset{I}_n\backslash i$.
 The published trajectory is annotated with a common clock known to all robots. Hence, other robots are able to take into account both the spatial and temporal profile of the trajectory in their subsequent planning.

In the following sections, it is considered that all computations and trajectory planning are conducted from the perspective of a robot $i$.
All other robots are called the collaborating robots of robot $i$.
When no ambiguity arises, we omit the subscript $i$ in many expressions for simplicity.
\section{Multi-robot Homotopy Representation}\label{sec: homotopy}
We present a multi-robot tether-aware representation of homotopy constructed from the positions of the robots as well as their cable configurations.
The procedure for updating the homotopy representation in every iteration is detailed in Algorithm \ref{alg: homotopyupdate}, 
which includes updating the word (Section \ref{subsec: lineseg}), reducing the word (Section \ref{subsec: reduce}) and updating the contact points (Section \ref{subsec: contact}).

\begin{algorithm}
\DontPrintSemicolon
\SetKwBlock{Begin}{function}{end function}
\Begin($\text{homotopyUpdate} {(}h{(}k-1{)}, \mathcal{P}{(}k{)}, \mathcal{P}{(}k-1{)}, \{\mathcal{C}_j{(}k{)}\}_{j\in\myset{I}_n\backslash i}, \{\mathcal{C}_j{(}k-1{)}\}_{j\in\myset{I}_n},  \Obset{)}$)
{
$h(k)\leftarrow$UpdateWord$($$h(k-1)$, $\mathcal{P}\left(k-1\right)$, $\mathcal{P}\left(k\right)$, $\{\mathcal{C}_j(k-1)\}_{j\in\myset{I}_n}$, $\{\mathcal{C}_j{(}k{)}\}_{j\in\myset{I}_n\backslash i}$, $\Obset$$)$\;
$h(k)$$ \leftarrow $reduction$($$h(k)$, $\{\mathcal{C}_j{(}k{)}\}_{j\in\myset{I}_n\backslash i}$$)$\;
$\mathcal{C}_i(k)$ $\leftarrow$updateContactPoints$($$\vb{p}(k-1)$, $\vb{p}(k)$, $h(k)$, $\mathcal{C}_i(k-1)$$)$\; 
  \Return{$h(k)$, $\mathcal{C}_i(k)$}
}
\caption{Homotopy update}\label{alg: homotopyupdate}
\end{algorithm}

\subsection{Construction of Virtual Line Segments and 
Updating the Word}\label{subsec: lineseg}
{\cb To compute the representation of homotopy for the planning robot, 
a set of virtual line segments of the collaborating robots and the obstacles are created. Crossing one of these segments indicates an interaction with the corresponding obstacle or robot.}
To obtain the virtual segments of the static obstacles, we first construct $m$ non-intersecting lines $o_j$, $\forall j\in \myset{I}_m$, such that each of them passes through the interior of an obstacle $O_j$.
Each line $o_j$ is separated into three segments by two points on the surface of the obstacle, $\gbf{\zeta}_{j,0}$ and $\gbf{\zeta}_{j,1}$. 
{\cb The virtual segments, labelled as $o_{j,0}$ and $o_{j,1}$, are the two segments outside $O_j$, as shown in Figure \ref{fig:hsig_new}.}
To build the virtual line segments of the collaborating robots, we use the positions and the contact points obtained from the communication network.
The shortened cable configuration of robot $j$, labelled as $r_{j,1}$, is a collection of $\numcon_j+1$ line segments constructed by joining in sequence its base point $\vb{b}_j$, 
its contact points $\vb{c}_{j,1},\vb{c}_{j,2},\dots,\vb{c}_{j,\numcon_j}$ and finally its position $\vb{p}_j$.
The extension of $r_{j,1}$ beyond the robot position $\vb{p}_j$ until the workspace boundary is labelled as $r_{j,0}$.
We call $r_{j,1}$ and $r_{j,0}$ the cable line and extension line of robot $j$, respectively.
Figure \ref{fig:hsig_new} displays the actual cable configuration and the corresponding line segments of robot $j$. 
{\cb By constructing the virtual line segments of all robots and obstacles}, a word of a path is obtained by adding the letter corresponding to the line segment crossed in sequence, regardless of the direction of crossing. 
An example is shown in Figure \ref{fig:hsig_new}, where the word for the black dashed path for robot $i$ is $ o_{1,0}o_{2,0}o_{2,0}o_{2,0}r_{j,1}o_{3,1}$.

\begin{figure*}[!t]
\centering
\includegraphics[width=0.9\linewidth]{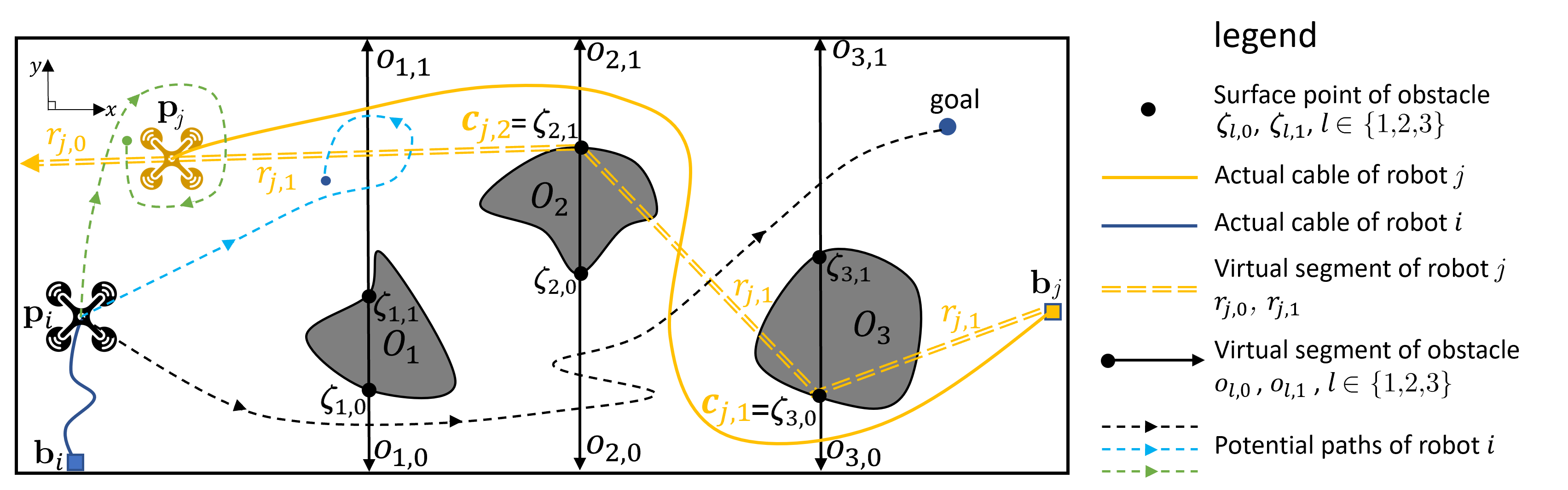}
\caption{\footnotesize {\color{black}Two tethered robots in a workspace consisting of three static obstacles.} The yellow curve represents the cable of robot $j$ and the yellow double dashed line represents the line segments of robot $j$.
In this case, the contact points for robot $j$ are the same as $\vb{\zeta}_{2,1}$ and $\vb{\zeta}_{3,0}$.
{\cb With robot $j$ staying static at $\vb{p}_j$,} the word of the black dashed path for robot $i$ is $ o_{1,0}o_{2,0}o_{2,0}o_{2,0}r_{j,1}o_{3,1}$.}
\label{fig:hsig_new}
\end{figure*}

In an online implementation, the word, denoted as $h(k)$, can be updated iteratively based on the incremental movements of the robots.
To ensure that each robot starts with an empty word $h(0)=$` ', we make the following assumption:
\begin{ass}{(Initial Positions)}\label{ass: initialpos}
The base positions and the initial positions of the robots are placed such that for each robot $i$, its initial simplified cable configuration $\overbar{\vb{p}_i(0)\vb{b}_i}$ does not intersect with any virtual segments $o_{l,f}$ and $r_{j,f}(0)$, $\forall l\in\myset{I}_m$, $j\in\myset{I}_n\backslash i$, $f\in\{0,1\}$, where $r_{j,1}(0)=\overbar{\vb{p}_j(0)\vb{b}_j}$.
\end{ass}

\begin{algorithm}
\DontPrintSemicolon
\SetKwBlock{Begin}{function}{end function}
\Begin($\text{UpdateWord}{(}h{(}k-1{)},\mathcal{P}{(}k-1{)}, \mathcal{P}{(}k{)}, \{\mathcal{C}_j{(}k-1{)}\}_{j\in\myset{I}_n}, \{\mathcal{C}_j{(}k{)}\}_{j\in\myset{I}_n\backslash i}, \Obset{)}$)
{
    $h(k)\leftarrow h(k-1)$\;
    $r_{j,f}(k),r_{j,f}(k-1) \leftarrow \text{getVirtualSegments}()$\;
    \uIf{$\overbar{\vb{p}_i(k-1)\vb{p}_i(k)}$ crosses $\textornot{o}_{j,f}$, $j\in \myset{I}_m, f\in\{0,1\}$, \label{ln: homo_normal_start}}
    {
        Append ${o}_{j,f}$ to $h(k)$
    }
    \uIf{$r_{j,f}$ sweeps across $\vb{p}_i$ {\color{black}within the time interval $\left(k-1,k\right]$}, $j\in \myset{I}_n\backslash i, f\in\{0,1\}$,}
    {
        Append ${r}_{j,f}$ to $h(k)$
    }\label{ln: homo_normal_end}
    \uIf{$r_{j,0}$ sweeps across $\vb{b}_i$ {\color{black}within the time interval $\left(k-1,k\right]$}, $j\in \myset{I}_n\backslash i$,\label{ln: homo_spe_start}}
    {
        Append ${r}_{j,0}$ to $h(k)$
    }\label{ln: homo_spe_end}    
  return $h(k)$
  }
  \label{updateword}
\caption{{\color{black}Word updating}}\label{alg: crossing}
\end{algorithm}

{\cb The procedure for updating the word at every discretized time $k$ is shown in Algorithm \ref{alg: crossing}.}
There are two cases where a letter can be appended to the word:
\begin{enumerate}
    \item an active crossing case (line \ref{ln: homo_normal_start}-\ref{ln: homo_normal_end}), where robot $i$ crosses a virtual line segment during its movement from $\vb{p}(k-1)$ to $\vb{p}(k)$, or in the case when a collaborating robot $j$ is moving, robot $i$ is swept across by a robot $j$'s virtual segment $r_{j,f}$, $f\in\{0,1\}$, as shown in Figure \ref{fig: crossing}; 
    \item a passive crossing case (line \ref{ln: homo_spe_start}-\ref{ln: homo_spe_end}), where the extension line of a collaborating robot, $r_{j,0}$, $j\in \myset{I}_n\backslash i$, sweeps across robot $i$'s base, $\vb{b}_i$. 
\end{enumerate}
The second case is needed to ensure consistency of representation regarding different starting positions of robots. 
As shown in Figure \ref{fig: crossing}, both positions $\vb{p}_j(k-2)$ and $\vb{p}_j(k)$ are valid starting positions of robot $j$, i.e. they can be chosen as $\vb{p}_j(0)$ because they satisfy Assumption \ref{ass: initialpos} with $\vb{p}_i(k)$ chosen as the initial position of robot $i$. Therefore, both positions should induce an empty word for robot $i$.
{\cb This can be ensured only by appending the letter $r_{j,0}$ twice when robot $j$ moves from $\vb{p}_j(k-2)$ to $\vb{p}_j(k)$,} one time when $r_{j,0}$ sweeps across robot $i$ and the other time when $r_{j,0}$ sweeps across $\vb{b}_i$.
Then a reduction procedure cancels the consecutive same letters (Section \ref{subsec: reduce}), resulting in an empty word.

\begin{figure}[!t]
\centering
\includegraphics[width=1.0\linewidth]{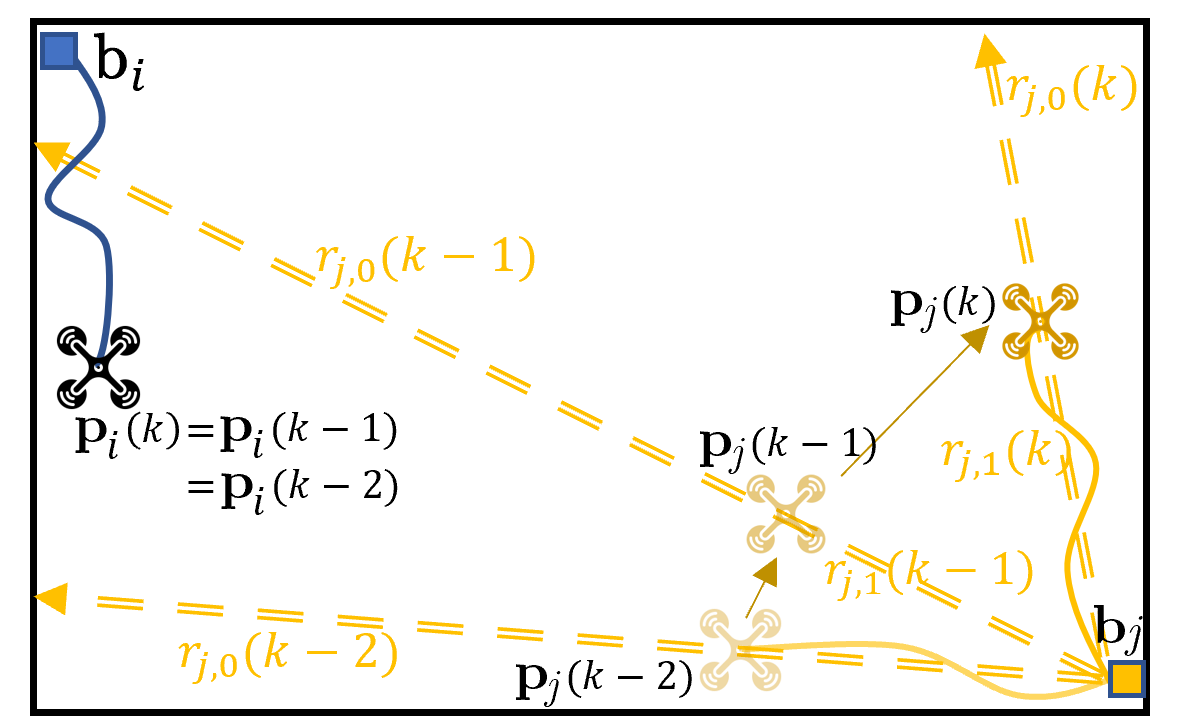}
\caption{\footnotesize {\color{black}An illustration of robot $i$ crossing the extension line of robot $j$. Between time $k-2$ and $k-1$, robot $i$ crossed the segment $r_{j,0}$, because $\vb{p}_i(k-2)$ and $\vb{p}_i(k-1)$ lied on different sides of $r_{j,0}(k-2)$ and $r_{j,0}(k-1)$, respectively.
It can be also viewed as robot $i$ swept across by $r_{j,0}$.
Therefore, $r_{j,0}$ should be appended to $h(k-1)$.
Between time $k-1$ and $k$, robot $i$'s base was swept across by $r_{j,0}$,} hence $r_{j,0}$ should be appended to $h(k)$.
}
\label{fig: crossing}
\end{figure}

\begin{rem}
In practice, Assumption \ref{ass: initialpos} can be always satisfied by adjusting the initial positions and the base positions of the robots, such that any intersections with the virtual segments are avoided.
For a given set of initial positions and base positions, valid virtual segments $o_j$ can be constructed by selecting a reference point from the interior of each obstacle (the reference points should not coincide), and then sampling (uniformly or randomly) directions for extending the points into sets of parallel lines. The set of lines satisfying Assumption \ref{ass: initialpos} is chosen. This procedure only needs to run at the initialization stage and $o_{j,0}$ and $o_{j,1}$ can be saved for online use.
\end{rem}

\subsection{Reduced Form of Homotopy Representation}\label{subsec: reduce}
{\cb A critical step for establishing a homotopy invariant for a topological space is to identify a set of words that are topologically equivalent to the empty word and remove them from the original hotomotpy representation. These words correspond to the elements of the fundamental group mapped to the identity element \cite{bhattacharya2018path}. }
For example, in \cite{Allen2002Algebraic}, {\cb the same consecutive letters,} corresponding to crossing and then uncrossing the same segment, can be removed from the original word to obtain a reduced word.
{\cb In our approach, we identify and remove not only the identical consecutive letters but also combinations of letters representing paths or sub-paths that loop around the intersections of virtual segments.}
For example, in Figure \ref{fig:hsig_new}, the dashed blue path that loops around the intersection of $r_{j,1}$ and $o_{1,1}$ can be topologically contracted to a point (it is null-homotopic).
Therefore, its word $o_{1,1}r_{j,1}o_{1,1}r_{j,1}$ can be replaced by an empty string (for further understanding of the topological meaning of null-homotopic loops, readers may refer to Appendix \ref{apd: homotopy3d}).
The reduction procedure is shown in Algorithm \ref{alg: reduction}, {\color{black}which consists of the following steps}:
\begin{enumerate}
    \item Given an unreduced homotopy representation $h(k)$ at time $k$,
we check whether it contains a pair of identical letters $\chi_{j,f}$,
$\chi\in\{o, r\}$, $f\in\{0,1\}$, and extract the string $\chi_{j,f}\chi^1\chi^2\dots\chi^\omega\chi_{j,f}$ from $h(k)$. {\cb Here $\omega$ is the number of letters between the pair.} 
\item We check if the virtual segment $\chi_{j,f}$ intersects with all of the segments $\chi^1$, $\chi^2$, $\dots$, $\chi^{\omega}$ at time $k$. If so, remove both letters $\chi_{j,f}$ from $h(k)$.
\item {\cb After a removal}, go to step 1) and start checking again.
\end{enumerate}

\begin{algorithm}
\DontPrintSemicolon
\SetKwBlock{Begin}{function}{end function}
\Begin($\text{Reduction}{(}h{(}k{)},\{\mathcal{C}_j{(}k{)}\}_{j\in\myset{I}_n\backslash i}{)}$)
{
Reduced $\leftarrow$ True\;
  \While{Reduced}
    {
    Reduced $\leftarrow$ False\;
    \For{$\alpha\in[1,\text{size}(h(k))-1]\cap\mathbb{Z}$}
    {
        \For{$j\in[\alpha+1,\text{size}(h(k))]\cap\mathbb{Z}$}
        {
        \uIf{\cb$\text{entry}(h(k),\alpha)==\text{entry}(h(k),j)$}
        {
            {\cb remove $\text{entry}(h(k),\alpha)$ and $\text{entry}(h(k),j)$}\;
            Reduced $\leftarrow$ True\;
            break
        }
        \uElseIf{\cb$\text{entry}(h(k),\alpha)$ does not cross $\text{entry}(h(k),j)$}
        {break}
        }
        \uIf{Reduced}{break}
    }
    }

  return $h(k)$
  }
  \label{reduction}
\caption{{\color{black}Word reduction}}\label{alg: reduction}
\end{algorithm}
The reason behind the procedure is that, in a static environment, the string $\chi_{j,f}\chi^1\chi^2\dots\chi^\omega\chi_{j,f}$ constitutes a part of the loop if the segment $\chi_{j,f}$ intersects with all of the segments in between, $\chi^1$, $\dots$,  $\chi^\omega$.
The loop is in the form of $\chi_{j,f}\chi^1\chi^2\dots\chi^\omega\chi_{j,f}\diamond\Lambda(\chi^1\chi^2\dots\chi^\omega)$, where $\Lambda(\cdot)$ denotes a particular sequence of the input letters.
Figure \ref{fig: loop} shows two particular forms of the loop for $\omega=3$.
{\cb As a loop is an identity element in the fundamental group,} we can write $\chi_{j,f}\chi^1\chi^2\dots\chi^\omega\chi_{j,f} = \Lambda(\chi^1\chi^2\dots\chi^\omega)^{-1}$.
Assuming that $\Lambda(\chi^1\dots\chi^\omega)$ takes the form shown in Figure \ref{fig: loopa}, i.e.,  $\Lambda(\chi^1\dots\chi^\omega) = \chi^\omega\chi^{\omega-1}\dots\chi^1$, we can establish a relation $\chi_{j,f}\chi^1\chi^2\dots\chi^\omega\chi_{j,f} = \chi^1\chi^2\dots\chi^\omega$, {\cb thus removing both letters $\chi_{j,f}$}.
{\cb As a special case,} although virtual segments $r_{j,0}$ and $r_{j,1}$ share a common point at robot $j$'s position, {\cb they are considered non-intersecting. Otherwise, any loop around the robot (e.g., the dashed green path in Figure \ref{fig:hsig_new}) will be reduced to a single letter which reflects a different topological meaning.}
\begin{figure}
	\centering
    \subcaptionbox{ \footnotesize \label{fig: loopa}}[0.45\linewidth]{\includegraphics[height=2.5cm]{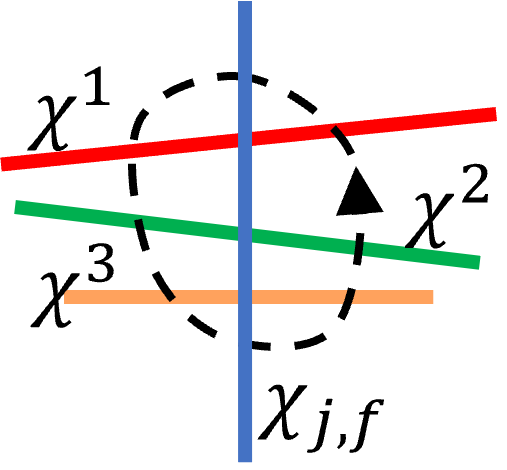}}
    \subcaptionbox{\footnotesize \label{fig: loopb}}[0.45\linewidth]{\includegraphics[height=2.5cm]{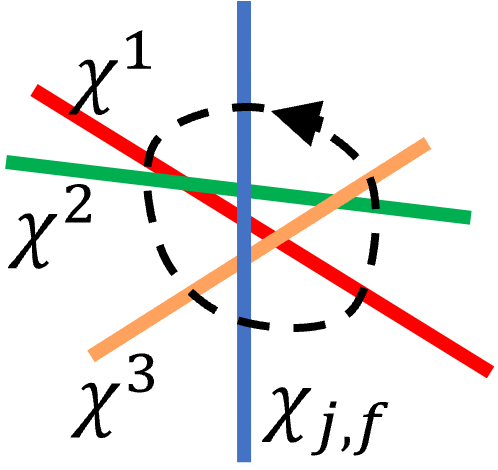}}
	\caption{\footnotesize Two loops around the intersections among 4 segments. \\{\color{black}(a) Loop expressed as $\chi_{j,f}\chi^1\chi^2\chi^3\chi_{j,f}\chi^3\chi^2\chi^1$. (b) Loop expressed as $\chi_{j,f}\chi^1\chi^2\chi^3\chi_{j,f}\chi^1\chi^2\chi^3$.}
    }  \label{fig: loop}
\end{figure}

It should be noted that the reduction rules introduced here are insufficient to generate a true homotopy invariant for the topological space considered in our scenario, which requires a much more involved construction in a 4-dimensional (X-Y-Z-Time) space and is beyond the scope of this paper.
However, the reduced homotopy representation captures important topological information about the robot to check for the entanglement of cables.
The condition to identify entanglement can be stated as a two-entry rule:
a robot is considered to be risking entanglement at time $k$, if its reduced homotopy representation contains two or more letters corresponding to the same collaborating robot, i.e. $h(k)$ consists of two or more entries of $r_{j,f}$, $j\in\myset{I}_n\backslash i$, $f\in\{0,1\}$.
{\cb This rule can be justified as follows:}
\begin{prop}\label{prop: twoentries}
 Consider two robots moving in a 2-D workspace $\myset{W}$ without any static obstacles, each tethered to a fixed base at $\vb{b}_1$ and $\vb{b}_2$. The starting positions of the robots satisfy Assumption \ref{ass: initialpos}.
 An entanglement occurs only if the reduced homotopy representation of at least one of the robots, $h_i(k)$, $i\in\myset{I}_2$, contains at least two entries related to the other robot, $ \textornot{r}_{j,f}$, $j\in\myset{I}_2\backslash i$, $f\in\{0,1\}$.
\end{prop}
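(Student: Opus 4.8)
The plan is to prove the contrapositive: if the reduced word of \emph{each} robot contains at most one letter associated with the other robot, then no entanglement can occur. Since the workspace has no obstacles and is simply connected, the only letters that can appear in $h_i(k)$ are of the form $r_{j,f}$, $j\in\myset{I}_2\setminus i$, $f\in\{0,1\}$, and the union $\ell_j(k)=r_{j,0}(k)\cup r_{j,1}(k)$ is a simple curve running from the boundary point $\mathbf{b}_j$ through $\mathbf{p}_j$ to a second boundary point. Because $\myset{W}$ is homeomorphic to a disk, this curve separates $\myset{W}$ into exactly two connected components. The first step is therefore to establish a bookkeeping lemma: each appended letter $r_{j,f}$ records exactly one transition of $\mathbf{p}_i$ between these two components, counting both the active case where $\mathbf{p}_i$ crosses $\ell_j$ and the passive/sweep cases where the moving segment sweeps past $\mathbf{p}_i$ or $\mathbf{b}_i$. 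The passive base-sweep rule of Algorithm~\ref{alg: crossing} is precisely what renders this ``side of the cut'' quantity independent of the particular admissible starting configuration guaranteed by Assumption~\ref{ass: initialpos}.

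Next I would interpret the reduced word as a winding count. A back-and-forth excursion that crosses $\ell_j$ and returns on the same side produces two equal consecutive letters and is deleted by Algorithm~\ref{alg: reduction}; more generally the reduction removes exactly the sub-words bounding a null-homotopic loop, i.e. motions contractible without sweeping past the free cable end $\mathbf{p}_j$. The crucial design choice is that $r_{j,0}$ and $r_{j,1}$ are declared non-intersecting at $\mathbf{p}_j$, so a loop encircling $\mathbf{p}_j$ — which must exit through $r_{j,1}$ and re-enter through $r_{j,0}$ (or vice versa) — is \emph{not} reducible and leaves two surviving letters. I would then argue that after reduction the number of $r_{j,f}$ entries equals the net number of essential crossings of the taut cable $\overline{\mathbf{b}_i\mathbf{p}_i}$ with $\ell_j$, equivalently the winding of robot $i$'s trajectory around the free end $\mathbf{p}_j$ of robot $j$'s cable.

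From here the physical conclusion follows by a separability argument under an operational definition of entanglement (no entanglement $=$ the cables admit a simultaneous relaxation to their straight taut configurations without obstructing either robot's prescribed motion). Under the hypothesis that both reduced words have at most one relevant letter, each taut cable $\overline{\mathbf{b}_i\mathbf{p}_i}$ crosses the other robot's cut at most once, hence crosses the other robot's \emph{cable} $r_{j,1}$ at most once. A single planar crossing of two arcs carries no alternating over/under pattern (at least two crossings are needed to alternate, as in Figure~\ref{fig: entangle1g}), so the upper cable slides off the free end of the lower one and the cables relax freely; this is precisely the absence of entanglement. Taking the contrapositive, an entanglement forces at least two essential crossings, which by the winding interpretation survive reduction as two or more $r_{j,f}$ entries in at least one of $h_1(k)$, $h_2(k)$.

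I expect the main obstacle to be the bookkeeping lemma of the first paragraph together with the rigorous definition of entanglement, because both robots move simultaneously and so the separating curve $\ell_j$ is itself time-varying: I must show that the ``side of the cut'' / winding quantity is well defined and changes by exactly one unit at each recorded letter, uniformly across the active crossing, the sweep of $r_{j,f}$ across $\mathbf{p}_i$, and the sweep of $r_{j,0}$ across $\mathbf{b}_i$. A secondary difficulty is justifying that the rules of Algorithm~\ref{alg: reduction} remove \emph{all and only} the null-homotopic (non-entangling) excursions in this obstacle-free setting, so that the surviving letter count is a genuine topological invariant of the relative motion rather than an artifact of the time discretization.
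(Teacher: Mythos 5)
Your route is genuinely different from the paper's, and it is worth saying how. The paper proves the forward implication directly and far more lightly: it declares the configuration of Figure~\ref{fig: entangle1g} to be the canonical two-robot entanglement (asserting that more complex entanglements only add further letters, so it suffices to treat this case), and then simply traces the crossing sequence needed to realize it --- robot $i$ crosses $r_{j,1}$, then robot $j$ crosses $r_{i,1}$ --- observing that by Assumption~\ref{ass: initialpos} the extension line $r_{i,0}$ must first sweep across either robot $j$ or $\vb{b}_j$, so that $h_j$ becomes $r_{i,0}$ and, after the second crossing, $h_j = r_{i,0}r_{i,1}$. That is a constructive bookkeeping of one scenario, with the reduction-to-canonical-case step left informal. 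Your contrapositive plan (every configuration in which both reduced words have at most one relevant letter admits a relaxation, hence no entanglement) is more ambitious and, if completed, would actually be stronger than what the paper establishes; your use of Assumption~\ref{ass: initialpos} through the passive base-sweep rule of Algorithm~\ref{alg: crossing} plays the same role as in the paper's argument.

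However, two steps as you state them would fail. First, the bookkeeping lemma is false as phrased: when $r_{j,0}$ sweeps across $\vb{b}_i$, a letter is appended but $\vb{p}_i$ does \emph{not} change components of $\myset{W}\setminus\ell_j$; the invariant cannot be ``the side of the cut on which $\vb{p}_i$ lies'' but must be attached to the homotopy class of robot $i$'s entire cable (equivalently, to the pair $(\vb{b}_i,\vb{p}_i)$ relative to the moving cut), which is precisely why the paper's Algorithm~\ref{alg: crossing} has the separate passive case. Second, the identification ``surviving $r_{j,f}$ entries $=$ crossings of the taut cable $\overbar{\vb{b}_i\vb{p}_i}$ with $\ell_j$'' cannot be right: with no obstacles there are no contact points, so $r_{j,1}=\overbar{\vb{b}_j\vb{p}_j}$ and $\ell_j$ is a single straight chord, and two straight segments intersect at most once --- yet the entangled word $h_j=r_{i,0}r_{i,1}$ has two entries. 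Only your alternative phrasing (winding of the \emph{relative} motion around the free end $\vb{p}_j$) is salvageable, and making that precise when both endpoints move is exactly the difficulty you flag rather than a detail. Finally, note that the paper never formalizes ``entanglement''; its proof leans on Figure~\ref{fig: entangle1g}, and your ``single crossing admits no alternating over/under pattern, so the cable slides off'' step is ultimately the same physical appeal to the Z-order at two intersections, not something your framework currently derives.
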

\begin{proof}
Figure \ref{fig: entangle1g} illustrates the simplest entanglement scenario between the two robots. {\cb More complex 2-robot entanglements are developed from this scenario when the robots circle around each other, resulting in more entries in their homotopy representations.}
Hence, it is sufficient to evaluate the simplest case.
A sequence of crossing actions is required to realize this case. Robot $i$ crosses the cable line of robot $j$, $r_{j,1}$, followed by robot $j$ crossing the cable line of robot $i$, $r_{i,1}$, $i\in\myset{I}_2$, $j\in\myset{I}_2\backslash i$.
Due to the requirement on the initial positions (Assumption \ref{ass: initialpos}), {\cb before crossing the cable of robot $j$ by robot $i$, the extension line $r_{i,0}$ will sweep across either robot $j$ or base $j$,} causing the reduced word of robot $j$ to be $h_j={r}_{i,0}$.
{\cb After crossing $r_{i,1}$ by robot $j$,} the word of robot $j$ is guaranteed to have two entries, $h_j = r_{i,0}r_{i,1}$.
\end{proof}

\begin{figure*}
	\centering
    \subcaptionbox{ \label{fig: ent_pro1}}[0.24\linewidth]{\includegraphics[width=\linewidth]{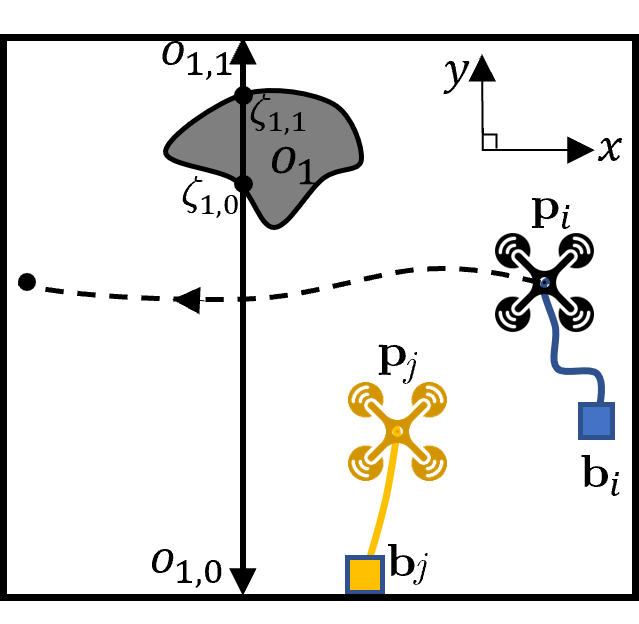}}
    \subcaptionbox{ \label{fig: ent_pro2}}[0.24\linewidth]{\includegraphics[width=\linewidth]{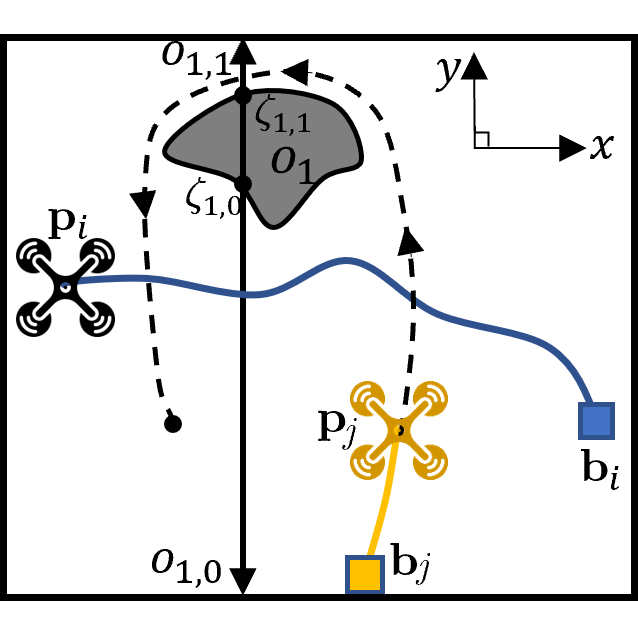}}
    \subcaptionbox{ \label{fig: ent_pro3}}[0.24\linewidth]{\includegraphics[width=\linewidth]{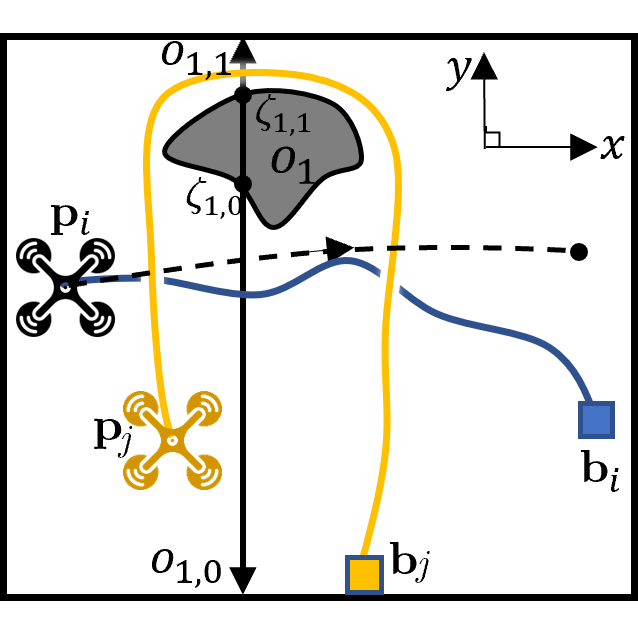}}
    \subcaptionbox{ \label{fig: ent_pro4}}[0.24\linewidth]{\includegraphics[width=\linewidth]{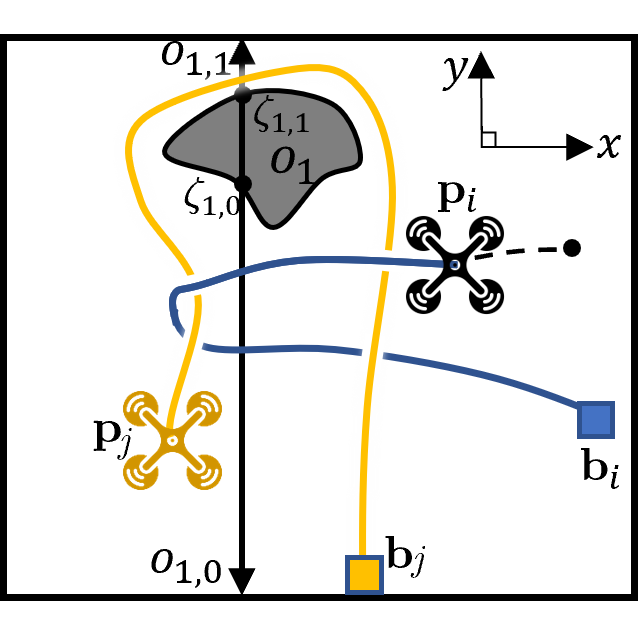}}    
	\caption{ \footnotesize Two UAVs follow a series of movements, resulting in an entanglement situation. {\cb The black dashed paths indicate the paths followed by each robot till their goals.} The blue and yellow curves are the cables of robot $i$ and robot $j$, respectively. (a) Robot $i$ moves to the negative $x$ direction. (b) {\cb Robot $j$ follows the indicated path crossing the cable of robot $i$ two times and passing an obstacle.} (c) Robot $i$ moves in the positive $x$ direction. (d) Robot $i$'s movement is restricted because its cable is tangled with robot $j$'s cable.}  \label{fig: ent2}
\end{figure*}

\begin{figure}[!t]
\centering
\includegraphics[width=0.6\linewidth]{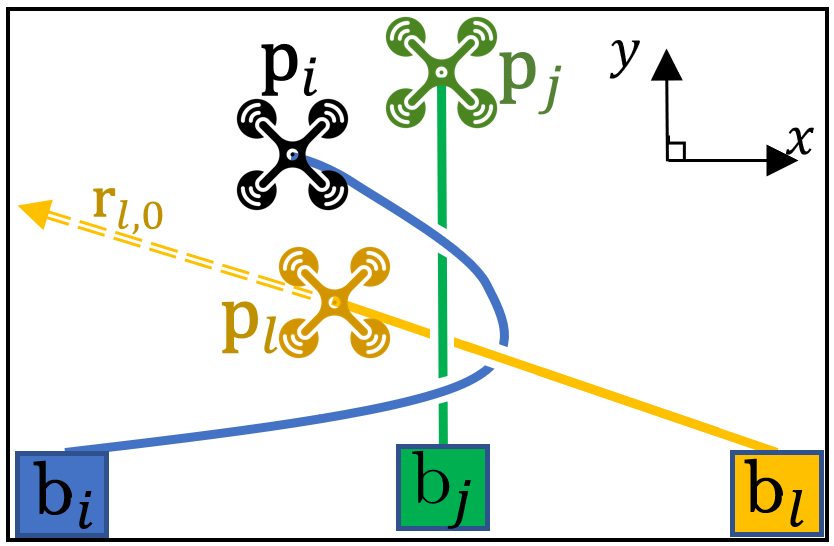}
\caption{\footnotesize A 3-robot entanglement scenario. The solid lines and curves show the cable configurations with their Z-ordering. The double dashed line is a virtual segment.
}
\label{fig: 3-rob ent}
\end{figure}
For operations using $3$ and more robots, any pair-wise entanglements involving only two robots can be detected using the same argument as Proposition \ref{prop: twoentries}. 
Figure \ref{fig: 3-rob ent} illustrates a more complicated $3$-robot entanglement, where robot $i$'s motion is hindered due to the cables of both robot $j$ and $l$ (removing either robot $j$ or $l$ releases robot $i$ from the entanglement).
{\cb This scenario is realized by the sequence of movements where robot $i$ crosses (1) the cable line of robot $j$, (2) the extension line $r_{l,0}$ of robot $l$, (3) the cable line of robot $j$ again.}
The homotopy representation of robot $i$ is $h_i=r_{j,1}r_{l,0}r_{j,1}$. {\cb Since segments $r_{l,0}$ and $r_{j,1}$ do not intersect, $h_i$ is nonreducible.} Therefore, the entanglement can be detected using the two-entry rule.

{\color{black}Considering obstacle-ridden environments, Figure \ref{fig: ent2} illustrates an entanglement caused by two robots following the intended paths sequentially.
Since the word of robot $j$, $h_j=r_{i,1}o_{1,1}r_{i,1}$, is nonreducible,  
the entanglement can also be identified using the two-entry rule. 
Intuitively, the two-entry rule can be used to identify instances when a robot partially circles around another robot (Figure \ref{fig: entangle1g}), and instances when a robot circles around a part of the cable of another robot in a topologically non-trivial way (Figure \ref{fig: ent_pro3} and Figure \ref{fig: 3-rob ent}).}

\subsection{Determination of Contact Points}\label{subsec: contact}

The list of contact points is updated at every iteration, {\cb along with updating and reducing the word.}
Intuitively, a contact point addition (or removal) occurs when a new bend of the cable is created at (or an existing bend is released from) {\cb the surface of the obstacles passed by the robot.}
For efficiency, we adopt a simplification of the obstacle shapes, such that each obstacle $j$ is only represented as a thin barrier $\overbar{\gbf{\zeta_{j,0}}\gbf{\zeta_{j,1}}}$.
The detailed procedure is described in Algorithm \ref{alg: funnel}, 
in which the indication of the timestamp $(k)$ in the expressions $\vb{c}_1(k)$, \dots, $\vb{c}_{\numcon_i(k)}(k)$ is omitted for brevity.
This algorithm checks if the robot has crossed any lines linking a contact point to the surface points of the obstacles, or any lines linking the two consecutive contact points; if such crossing happens, {\cb contact points are added or removed accordingly.}
{\cb Figure \ref{fig: funnel} illustrates Algorithm \ref{alg: funnel}.}
\begin{algorithm}
\DontPrintSemicolon
\SetKwBlock{Begin}{function}{end function}
\Begin($\text{updateContactPoints}{(}\vb{p}{(}k-1{)}, \vb{p}{(}k{)}, h{(}k{)},  \mathcal{C}_i{(}k-1{)}{)} $)
{
    $\mathcal{C}_i(k)\leftarrow \mathcal{C}_i(k-1)$\;
  \While{$\numcon_i\geq1$ and \label{ln:contact1}\\$\overline{\vb{p}(k-1) \vb{p}(k)}$ intersects with $ \overleftrightarrow{\vb{c}_{\numcon_i-1} \vb{c}_{\numcon_i}}$}
  {
        \tcp{$\vb{c}_0 = \vb{b}$ in this algorithm}
        remove $\vb{c}_{\numcon_i}$ from $\mathcal{C}_i(k)$\;
        $\numcon_i\leftarrow \numcon_i-1$\label{ln:contact2}
    }

    \ForAll{\cb $l\in [\text{index}(\vb{c}_{\numcon_i})+1,\text{size}(h(k))]\cap\mathbb{Z}$\label{ln:contact3}}
    {
    \uIf{$\exists j\in \myset{I}_m, f\in\{0,1\}$, s.t. {\cb\text{entry}$(h(k),l)==\textornot{o}_{j,f}$,}}
    {
        \uIf{$\overline{\vb{p}(k-1) \vb{p}(k)}$ intersects with $\overleftrightarrow{\vb{c}_{\numcon_i} \gbf{\zeta}_{j,f}}$}
        {
        $\numcon_i\leftarrow \numcon_i+1$\;     
        add $\vb{c}_{\numcon_i}\leftarrow\gbf{\zeta}_{j,f}$ to $\mathcal{C}_i(k)$\label{ln:contact4}\;
        }    
    }

    }
    
  return $\mathcal{C}_i(k)$
  }\label{funn}
\caption{Contact point update}\label{alg: funnel}
\end{algorithm}
\begin{figure*}
	\centering
    \subcaptionbox{ \footnotesize time $k$ \label{fig: fun1a}}[0.29\linewidth]{\includegraphics[height=5.5cm]{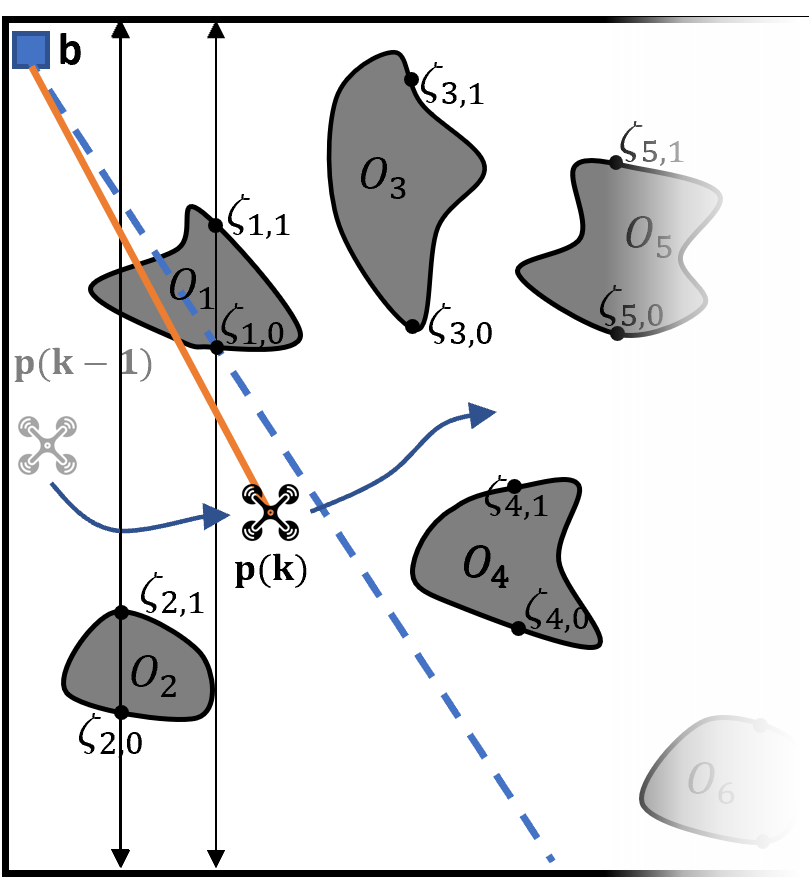}}
    \subcaptionbox{\footnotesize time $k+1$  \label{fig: fun1b}}[0.33\linewidth]{\includegraphics[height=5.5cm]{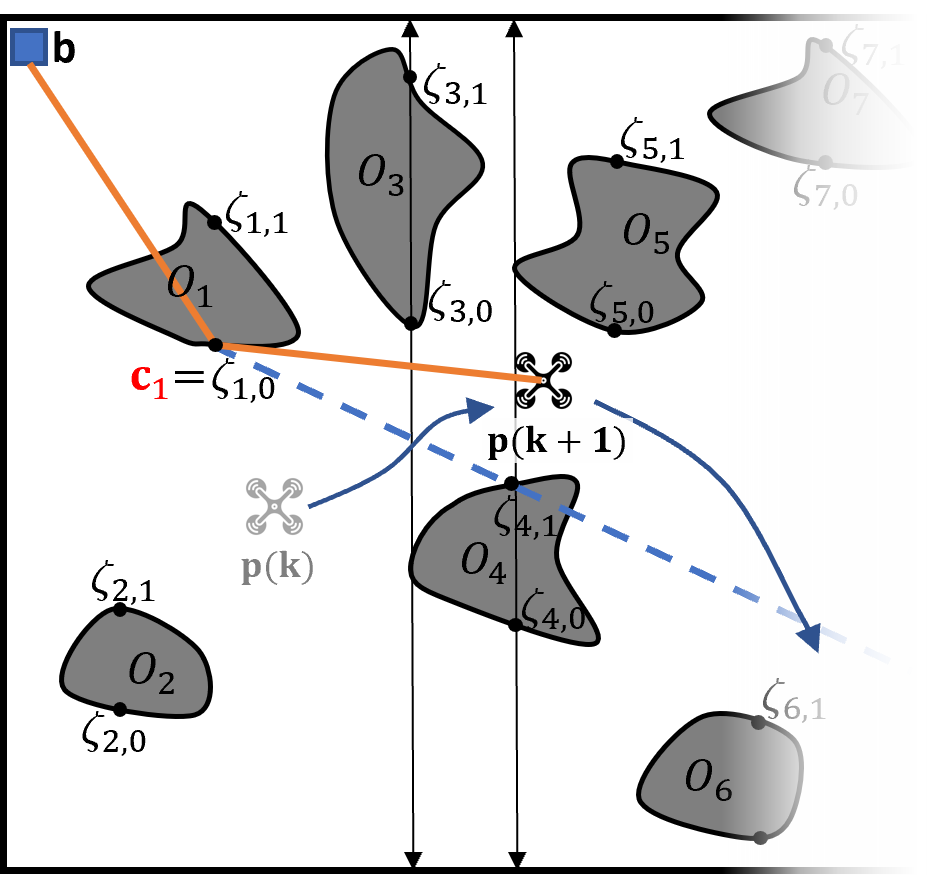}}
    \subcaptionbox{\footnotesize time $k+2$  \label{fig: fun1c}}[0.36\linewidth]{\includegraphics[height=5.5cm]{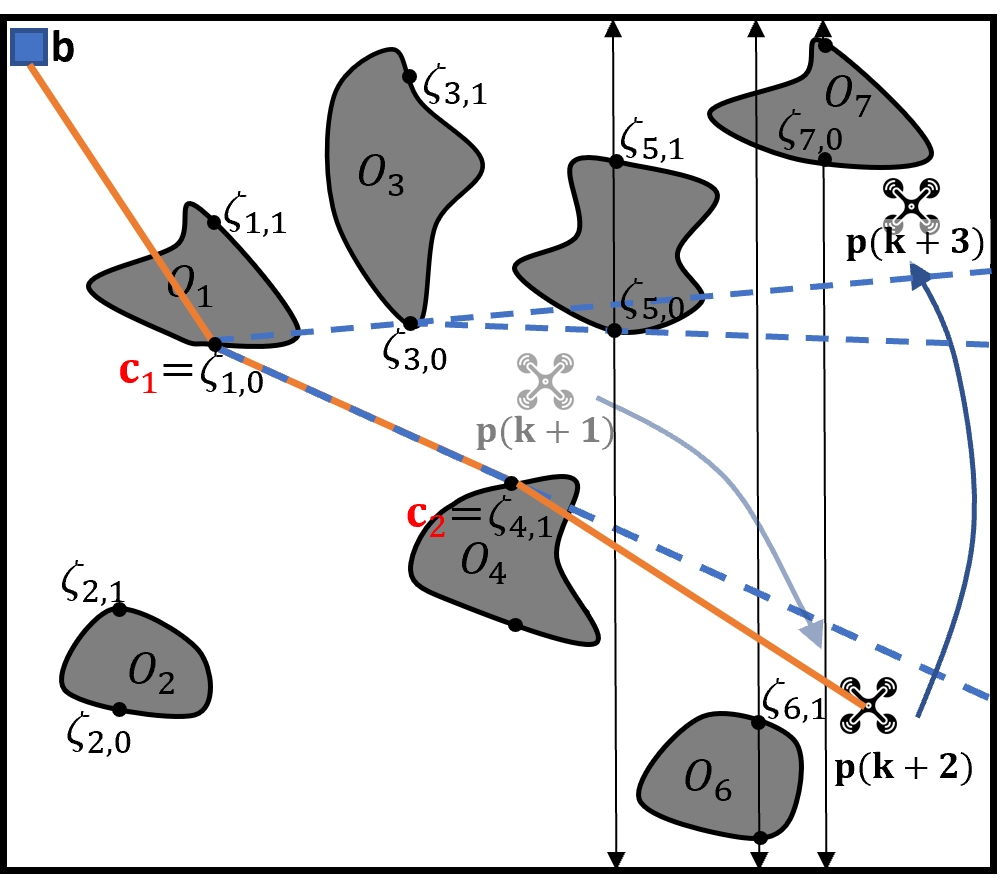}}
	\caption{\footnotesize A sequence of movements of robot $i$. 
	Figures (a) and (b) only show a part of the workspace.
	Orange line segments are the shortened cable configuration at each time. The blue dashed lines are the lines intersected by the robots' movements, resulting in addition or removal of contact points. 
	(a) At time $k$, robot has no contact point.
	$h(k)=\textornot{o}_{2,1}\textornot{o}_{1,0}$.
	(b) In the interval between $k$ and $k+1$, the robot crosses the line linking $\vb{b}_i$ and $\gbf{\zeta}_{1,0}$, causing $\gbf{\zeta}_{1,0}$ to be a contact point.
	$h(k+1) = \textornot{o}_{2,1}\textornot{o}_{1,0}\textornot{o}_{3,0}\textornot{o}_{4,1}$.
	(c) In the interval between $k+1$ and $k+2$, the robot crosses the line linking $\vb{c}_1$ and $\gbf{\zeta}_{4,1}$, causing $\gbf{\zeta}_{4,1}$ to become the second contact point.
	$h(k+2) = \textornot{o}_{2,1}\textornot{o}_{1,0}\textornot{o}_{3,0}\textornot{o}_{4,1}\textornot{o}_{5,0}\textornot{o}_{6,1}\textornot{o}_{7,0}$.
    If the robot moves from $\vb{p}(k+2)$ to $\vb{p}(k+3)$ in one time step, $\gbf{\zeta}_{4,1}$ will be removed from the contact points, {\cb and $\gbf{\zeta}_{3,0}$ and $\gbf{\zeta}_{5,0}$ will be added to the contact points. 
    However, $\gbf{\zeta}_{3,0}$ is not a valid contact point, because no contact between the cable and ${O}_3$ should occur when the cable forms a straight line between $\gbf{\zeta}_{1,0}$ and $\gbf{\zeta}_{5,0}$.}
    }  \label{fig: funnel}
\end{figure*}
We make the following assumption to ensure the correct performance of the algorithm:

\begin{ass}\label{ass: cross_one}

Consider $\mathcal{S}_{j,f}$ as the set of lines linking the obstacle surface point $\gbf{\zeta}_{j,f}$ to the surface points of different obstacles and the base, {\cb $\mathcal{S}_{j,f}=\big\{\overleftrightarrow{\gbf{\zeta}_{j,f}\gbf{\zeta}}|\gbf{\zeta}\in\vb{b}_i\cup\{\gbf{\zeta}_{l,\alpha}\}_{l\in\myset{I}_m\backslash j, \alpha\in\{0,1\}}\big\}$.
Within the time interval $\left(k-1,k\right]$,} only one distinct line in each $\mathcal{S}_{j,f}$ can be crossed by robot $i$, i.e., multiple lines in $\mathcal{S}_{j,f}$ can be crossed by the robot only if they are coincident, $\forall j\in\myset{I}_m$, $f\in\{0,1\}$. 
\end{ass}
{\cb Assumption \ref{ass: cross_one} 
prevents incorrect identification of surface points as contact points. 
In Figure \ref{fig: fun1c}, the robot moving from $\vb{p}(k+2)$ to $\vb{p}(k+3)$ will result in
$\gbf{\zeta}_{3,0}$ added as a contact point, but no contact between the cable and $O_3$ is possible in this case.}
Under Assumption \ref{ass: cross_one}, the only case where multiple contact points should be added or removed in one iteration is when multiple coincident lines are crossed.
Such a case can be handled by the iterative intersection check in Algorithm \ref{alg: funnel} following the sequence of the obstacles added to $h(k)$ (line \ref{ln:contact3} to \ref{ln:contact4}), and the reverse sequence of the contact points added to $\mathcal{C}_i(k)$ (line \ref{ln:contact1} to \ref{ln:contact2}).
In practice, {\cb Assumption \ref{ass: cross_one} holds true if the displacement of the robot is sufficiently small between consecutive iterations, 
which can be achieved by a sufficiently high iteration rate.}

We have the following statement on the property of the shortened cable configuration obtained using Algorithm \ref{alg: funnel}:
\begin{prop}\label{prop: shortest}
 {\cb Consider a robot $i$ tethered to a base $\vb{b}_i$ and moving in a 2-D environment consisting of only thin barrier obstacles }$\overbar{\gbf{\zeta}_{j,0}\gbf{\zeta}_{j,1}}$, $\forall j\in\myset{I}_m$.
 Let Assumptions \ref{ass: initialpos} and \ref{ass: cross_one} hold.
At a time $k$, the consecutive line segments formed by linking $\vb{b}_i$, $\vb{c}_{i,1}(k)$, \dots, $\vb{c}_{i,\numcon_i(k)}(k)$ and $\vb{p}_i(k)$ 
represent the shortest cable configuration of the robot homotopic to the actual cable configuration.
\end{prop}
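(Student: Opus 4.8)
The plan is to establish two properties of the configuration produced by Algorithm~\ref{alg: funnel}: first, that it is homotopic to the actual cable, and second, that among all curves in this homotopy class it achieves the shortest length. I would structure the argument as an induction on the iteration index $k$, since the algorithm updates the contact points incrementally and the word $h(k)$ is maintained consistently with the robot's motion.

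\emph{Homotopy preservation.} The base case follows from Assumption~\ref{ass: initialpos}: at $k=0$ the word is empty and the cable configuration is the straight segment $\overbar{\vb{p}_i(0)\vb{b}_i}$, which trivially coincides with the shortened configuration. For the inductive step, I would argue that each elementary operation in Algorithm~\ref{alg: funnel} preserves the homotopy class relative to the actual (physical) cable. A contact point is added (line~\ref{ln:contact4}) precisely when the robot's displacement segment $\overline{\vb{p}(k-1)\vb{p}(k)}$ crosses a line $\overleftrightarrow{\vb{c}_{\numcon_i}\gbf{\zeta}_{j,f}}$, which corresponds physically to the cable catching on the barrier endpoint $\gbf{\zeta}_{j,f}$; a contact point is removed (line~\ref{ln:contact2}) when the cable slides off an endpoint. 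The key claim is that these operations exactly mirror the bending and unbending of the real slack cable as it slides over the thin barriers, so the homotopy class (recorded by $h(k)$) is invariant under them. Here I would invoke the fact, established in Section~\ref{subsec: lineseg} and Section~\ref{subsec: reduce}, that $h(k)$ is updated to track the same ray-crossing events that define the homotopy class, so the contact-point list and the word remain mutually consistent.

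\emph{Minimality of length.} Given homotopy equivalence, I would then show the piecewise-linear curve through $\vb{b}_i, \vb{c}_{i,1}(k), \dots, \vb{c}_{i,\numcon_i(k)}, \vb{p}_i(k)$ is the shortest representative. The standard fact is that a shortest curve in a homotopy class relative to polygonal obstacles is piecewise linear, bending only at obstacle vertices (here, the barrier endpoints $\gbf{\zeta}_{j,f}$); this is the taut-string characterization. I would verify that the configuration produced by the algorithm is exactly \emph{taut}: each contact point is a genuine corner where the barrier blocks further straightening. The intersection tests in lines~\ref{ln:contact1} and~\ref{ln:contact3} guarantee precisely that no existing bend can be straightened (otherwise the segment would have crossed $\overleftrightarrow{\vb{c}_{\numcon_i-1}\vb{c}_{\numcon_i}}$ and the point would have been removed), and that every bend that \emph{must} be present is retained. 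I would couple this with Assumption~\ref{ass: cross_one} to ensure no spurious contact points are inserted: because at most one line in each $\mathcal{S}_{j,f}$ is crossed per iteration, the algorithm never introduces a bend where the cable actually forms a straight line (as illustrated by the invalid candidate $\gbf{\zeta}_{3,0}$ in Figure~\ref{fig: funnel}).

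The main obstacle I anticipate is rigorously justifying the taut-string minimality in the presence of the \emph{thin barrier} simplification, since the cable may pass on either side of a degenerate (zero-width) obstacle. I would need to argue that the sign/side information implicitly carried by the word $h(k)$ (through the distinction of segments $o_{j,0}$ versus $o_{j,1}$) correctly determines which endpoint $\gbf{\zeta}_{j,f}$ the taut cable wraps around, so that the piecewise-linear curve cannot be shortened by switching sides without leaving the homotopy class. Establishing that the local taut condition at each contact point, enforced by the algorithm's line-crossing tests, is equivalent to global length-minimality is the crux; I expect to close this gap by a standard local-to-global variational argument, noting that any shorter homotopic curve would differ from the algorithm's output in at least one segment and thereby force a vertex insertion or deletion that the algorithm's invariants preclude.
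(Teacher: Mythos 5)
Your proposal is correct in substance and rests on the same key fact as the paper's own argument, which (see Appendix~\ref{apd: proofshortest}) is only a brief sketch: it invokes the classical result that a shortest homotopic path amid polygonal obstacles is piecewise linear and bends only at obstacle vertices (citing the funnel-algorithm literature of Hershberger--Snoeyink and Lee--Preparata), observes that for thin barriers these vertices are exactly the surface points $\gbf{\zeta}_{j,f}$, and then makes one substantive additional step using Assumption~\ref{ass: initialpos}: a surface point can appear on the shortest configuration only if its virtual segment $o_{j,f}$ has been crossed, so it suffices for Algorithm~\ref{alg: funnel} to scan only the obstacle entries recorded in $h(k)$. Your ``taut-string characterization'' is precisely the cited result, so the local-to-global variational argument you flag as the crux is, in the paper, outsourced to the literature rather than re-proved --- your instinct that it is standard is sound. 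What you do differently is the organization: you set up an explicit induction on $k$ with a homotopy-preservation invariant for each contact-point addition and removal, which the paper leaves entirely implicit; this buys a more rigorous treatment of the incremental correctness of the algorithm, whereas the paper's top-down characterization buys brevity at the cost of glossing over exactly the maintenance argument you spell out. One concrete misalignment to fix: you use Assumption~\ref{ass: initialpos} only for the base case, but its essential role in the paper is the candidate-restriction step --- guaranteeing that no bend can ever be required at an endpoint $\gbf{\zeta}_{j,f}$ whose letter never entered $h(k)$, which is what licenses the algorithm's scan over only the entries of $h(k)$ beyond $\text{index}(\vb{c}_{\numcon_i})$. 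Your claim that ``every bend that must be present is retained'' needs exactly this observation to close; with it made explicit, your plan subsumes the paper's sketch.
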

\begin{proof}[Proof]
 See Appendix \ref{apd: proofshortest}.
\end{proof}

The length of the shortened cable configuration is a lower bound of the actual length of the shortest cable due to the use of simplified obstacle shapes.
In Section \ref{sec: search}, this length is compensated with an extra distance to the surface of the obstacles, to better approximate the required cable length.
As will be shown in Section \ref{sec: simulation}, the use of Algorithm \ref{alg: funnel} enables more efficient feasibility checking than the expensive curve shortening algorithm used in \cite{kim2014path, kim2015path}.
\begin{rem}

Algorithm \ref{alg: funnel} is inspired by the classic funnel algorithm \cite{HERSHBERGER199463}, which is widely used to find shortest homotopic paths in triangulated polygonal regions \cite{Teshnizi2021}. 
{\color{black}Compared to the funnel algorithm, Algorithm \ref{alg: funnel} is more computationally efficient. The reason is that the simplified obstacle shapes with fewer vertices and a maintained list of crossed obstacles in $h(k)$ reduce the number of needed crossing checks. Furthermore, using Algorithm \ref{alg: funnel}, the shortest path can be obtained trivially by linking all contact points. On the other hand, an additional procedure is required in the funnel algorithm to determine the shortest path (because the apex of the funnel may not be the last contact point).
Algorithm \ref{alg: funnel} also provides memory saving compared to the funnel algorithm. The latter requires saving a funnel represented as a double-ended queue (deque) consisting of the boundary points of the funnel, which is always greater than or equal to the list of contact points saved in our algorithm.}
As will be described in Section \ref{subsec: entanglement}, the homotopy update procedure is called frequently to incrementally predict and evaluate the homotopy status of the potential trajectories.
Therefore, the frequent updates of the contact points result in significant computational and memory saving compared to the funnel algorithm with full triangulation.
\end{rem}

\section{Kinodynamic Trajectory Finding}\label{sec: search}
The front end uses kinodynamic A* search algorithm \cite{zhou2019robust} to find a trajectory leading to the goal while satisfying the dynamic feasibility, collision avoidance and non-entanglement requirements.
A search-based algorithm is used instead of a sampling-based algorithm, such as RRT, to ensure better consistency of the trajectories generated in different planning iterations.
To reduce the dimension of the problem, the kinodynamic A* algorithm searches for feasible trajectories in the X-Y plane only. 
For UAVs, the computation of Z-axis trajectory is also required and will affect the feasible region in the X-Y plane. Hence, we design a procedure to generate dynamically feasible trajectories in Z-axis only, interested readers may refer to Appendix \ref{subsec: initialz} for details.

The search is conducted in a graph imposed on the discretized 2-D space augmented with the homotopy representation of the robot. {\cb Each node in the graph is a piece of trajectory of fixed duration $T$, which contains the following information:}
(1) the 2-D trajectory coefficients $\mathbf{E}_l\in\mathbb{R}^{(\trajorder+1)\times2}$, where $l$ is the index starting from the first trajectory as $0$; 
(2) the robot's states at the end of the trajectory $\en{\vb{p}}$, $\en{\vb{v}}$, $\en{\vb{a}}$; 
(3) the robot's homotopy-related information at the end of the trajectory, $\en{h}$ and $\en{\mathcal{C}}$; 
(4) the cost from start $g_\text{c}$ and heuristic cost $g_\text{h}$; (5) a pointer to its parent trajectory.
A node is located in the graph by its final position $\en{\vb{p}}$ and its homotopy representation $\en{h}$.
Successive nodes are found by
applying a set of sampled control inputs $\mathcal{U}=\{[u^\text{x},u^\text{y}]^\top|u^\text{x},u^\text{y}\in\{-\bar{u}, -\frac{\sigma_\text{u}-1}{\sigma_\text{u}}\maxop{u}, \dots, \frac{\sigma_\text{u}-1}{\sigma_\text{u}}\maxop{u}, \maxop{u}\}\}$ to the end states of a node for a duration $T$, where 
$\maxop{u}$ is the maximum magnitude of control input.
In our case, the control input is jerk and the degree of trajectory is $3$. Given a control input $\vb{u}\in\mathcal{U}$ applied to a parent node, the coefficients of the successive trajectory can be obtained as
\begin{align}
    \mathbf{E}_l = [\en{\vb{p}}_{l-1}\;\: \en{\vb{v}}_{l-1}\;\: \frac{1}{2}\en{\vb{a}}_{l-1} \;\:\frac{1}{6}\vb{u}]^\top\in\mathbb{R}^{4\times2}.
\end{align}
Then each of the successive nodes is checked for its dynamic feasibility (Section \ref{subsec: feasibility}), collision avoidance (Section \ref{subsec: collision}) and non-entanglement requirements (Section \ref{subsec: entanglement}). 
{\cb The homotopy update is conducted together with the entanglement check. 
Only the new nodes satisfying the constraints will be added to the open list.}
The search process continues until a node is found that ends sufficiently close to the goal without risking entanglement (based on the two-entry rule).
{\color{black}The heuristic cost $g_\text{h}$ is chosen to be the Euclidean distance between $\en{\vb{p}}$ and the goal.}

\subsection{Workspace and Dynamic Feasibility}\label{subsec: feasibility}
We use the recently published MINVO basis \cite{tordesillas2021minvo} to convert the polynomial coefficients to the control points, which form convex hulls that entirely encapsulate the trajectory and its derivatives.
$\mathcal{Q}_l$, $\mathcal{V}_l$ and $\mathcal{A}_l$ represent the set of 2-D position, velocity, and acceleration control points of a candidate node with index $l$, respectively. To ensure feasibility, we check whether the following inequalities are satisfied:
\begin{align}
    &\posctrlpt\in\mathcal{W},\; \forall\posctrlpt\in\mathcal{Q}_l,\label{eq: poscontrolpoint}\\
    &[\minop{v}^\text{x}\;\minop{v}^\text{y}]^\top\leq\vb{v}\leq[\maxop{v}^\text{x}\;\maxop{v}^\text{y}]^\top,\; \forall\vb{v}\in\mathcal{V}_l,\label{eq: velcontrolpoint}\\
    &[\minop{a}^\text{x}\;\minop{a}^\text{y}]^\top\leq\vb{a}\leq[\maxop{a}^\text{x}\;\maxop{a}^\text{y}]^\top,\; \forall\vb{a}\in\mathcal{A}_l.\label{eq: acccontrolpoint}
\end{align}

\begin{figure}[!t]
\centering
\includegraphics[width=0.8\linewidth]{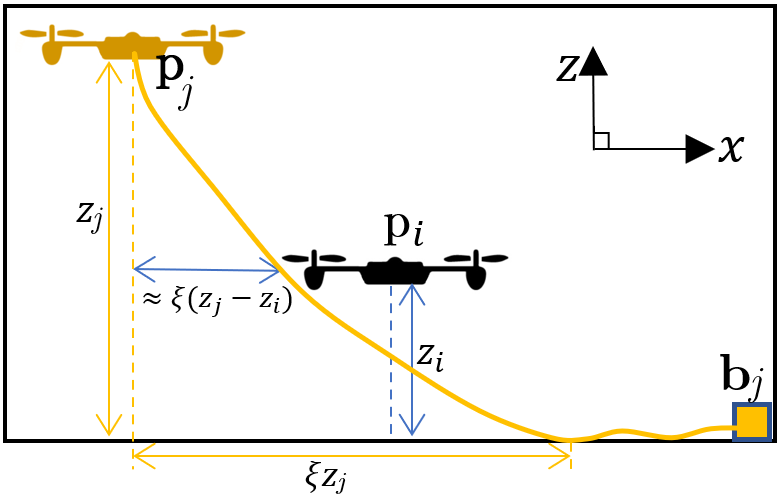}
\caption{\footnotesize UAV $j$'s cable tails behind it, causing an expansion of the collision zone.}
\label{fig: trailing}
\end{figure}

\subsection{Collision Avoidance} \label{subsec: collision}
Due to friction, a UAV flying at a fast speed may have its cable trailing behind, as shown in Figure \ref{fig: trailing}.
We model the range of cable that may stay in the air as $\xi z$ for a UAV flying at altitude $z$. $\xi$ is a constant to be determined empirically.
Therefore, to guarantee non-collision with robot $j$ and its cable, the planning UAV needs to maintain a distance more than $\rho_j+\rho_i+\xi \Delta z_l$, where $\rho_j$ and $\rho_i$ are the radii of UAVs, and $\Delta z_l$ is the maximum possible altitude difference during the planning interval $l$,
which can be obtained from their Z-axis trajectories.
Eventually, we need to check whether the following equality holds:
\begin{align}
    \text{hull}(\mathcal{Q}_{j,l})\oplus\mathbf{B}_{j,l}\cap\text{hull}(\mathcal{Q}_l)=\emptyset,\label{eq: collision avoidance}
\end{align}
where $\text{hull}(\cdot)$ represents the convex hull enclosing the set of control points, $\mathcal{Q}_{j,l}$ is the minimum set of MINVO control points containing the trajectory of robot $j$ during the planning interval $l$, 
$\mathbf{B}_{j,l}$ is the axis-aligned bounding box whose side length is $\rho_j+\rho_i+\xi \Delta z_l$, and $\oplus$ is the Minkowski sum.
We use the Gilbert–Johnson–Keerthi (GJK) distance algorithm \cite{2083} to efficiently detect the collision between two convex hulls.
{\cb Checking the collision with static obstacles can be done similarly by checking the intersections with the inflated convex hulls containing the obstacles.}
\subsection{Cable Length and Non-Entanglement Constraints}\label{subsec: entanglement}
A tethered robot should always operate within the length limit of its cable and never over-stretch its cable.
We check whether a trajectory with index $l$ satisfies this by approximating the required cable length at a position $\vb{p}_i$ using the list of active contact points:
\begin{align}
    \phi_i\geq&\norm{\vb{b}_{i}-\vb{c}_{i,1}} +\sum_{\alpha=1\dots\numcon_i-1}{\norm{\vb{c}_{i,\alpha+1}-\vb{c}_{i,\alpha}}}+\nonumber\\
    &\sum_{\alpha=1\dots\numcon_i}{\mu_\alpha}+\norm{\vb{p}_i-\vb{c}_{i,\numcon_i}}+\maxop{z}_l,\label{eq: cablelength}
\end{align}
where $\maxop{z}_l$ is the maximum altitude of the robot during planning interval $l$ and $\mu_\alpha>0$ is an added length to compensate for the underestimation of cable length using contact points only.
For safety, we can set $\mu_\alpha$ as $2$ times the longest distance from the contact point to any surface points of the same obstacle. 

Algorithm \ref{alg: entangle} incrementally predicts the planning robot's homotopy representation by getting sampled states of the robots (line \ref{ln: sample}) and updating the representation using Algorithm \ref{alg: homotopyupdate} (line \ref{ln: homoupdate}).
After each predicted update, it checks the cable length constraints (line \ref{ln: cablelength}) and the non-entanglement constraint (line \ref{ln: startentanglecheck}-\ref{ln: endentanglecheck}).
{\cb We discard trajectories that fail the entanglement check only due to a crossing action incurred in the current prediction cycle (line \ref{ln: entcriteria}).} 
In this way, we allow the planner starting from an unsafe initial homotopy representation to continue searching and find a trajectory escaping the unsafe situation.
\begin{algorithm}
\DontPrintSemicolon
\SetKwBlock{Begin}{function}{end function}
\Begin($\text{EntanglementCheck{(}node{)}} $)
{
  $\mathcal{P}(0)\leftarrow\text{getSampledStates}(E_l,\{\mathcal{P}_{j,l}\}_{j\in \myset{I}_n\backslash i},0)$\;
  \For{$k=1\dots \sigma$}
  {
    $\mathcal{P}(k)\leftarrow\text{getSampledStates}(E_l,\{\mathcal{P}_{j,l}\}_{j\in \myset{I}_n\backslash i},k)\label{ln: sample}$\;
    $\en{h}, \en{\mathcal{C}}\leftarrow\text{homotopyUpdate} (\en{h}, \mathcal{P}(k), \mathcal{P}(k-1),\{\mathcal{C}_j{(}t^\text{in}{)}\}_{j\in\myset{I}_n\backslash i},
    \{\mathcal{C}_j{(}t^\text{in}{)}\}_{j\in\myset{I}_n\backslash i}\cup\en{\mathcal{C}}, \Obset$)\label{ln: homoupdate}\;
    \uIf{$\text{Eqn.(\ref{eq: cablelength}) does NOT hold}$\label{ln: cablelength}}
    {
    return False\;
    }
    \ForAll{$j\in \myset{I}_n\backslash i$\label{ln: startentanglecheck}}
    {
    \uIf{number of entries of $\text{r}_{j,f}$, $f\in\{0,1\}$ in $\en{h}$ increases AND is $\geq2$\label{ln: entcriteria}}
    {
    return False\label{ln: endentanglecheck}
    }
    }
    
    $\mathcal{P}(k-1)\leftarrow\mathcal{P}(k)$
  }
  return True
  }\label{endentangle}
\caption{Cable length and entanglement check}\label{alg: entangle}
\end{algorithm}

\section{Trajectory Optimization}\label{sec: opt}
\begin{figure*}
	\centering
    {\includegraphics[width=0.8\textwidth]{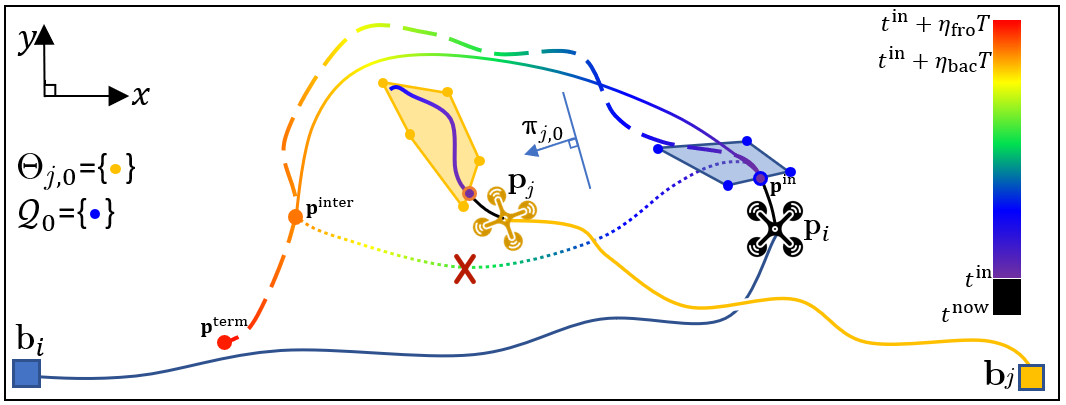}}
	\caption{\footnotesize The dashed rainbow curve is the trajectory obtained from the front-end trajectory finder. The solid rainbow curve is the optimized trajectory. The colors on the rainbow curves indicate the time of the trajectory since $\init{t}$. The dotted rainbow curve is the optimization result if we do not add the non-crossing constraint into the optimization problem, which is an unacceptable shortcut because robot $i$ has already crossed $r_{j,0}$ and should not cross $r_{j,1}$. $\gbf{\pi}_{j,0}$ is the parameter of the line separating robot $i$'s trajectory from robot $j$'s trajectory.}  \label{fig: planning}
\end{figure*}
The output of the front-end planner is $\front{\eta}$ pieces of consecutive polynomial curves, $\aug{\mathbf{E}}_{l}^{\text{in}}$, $ l\in[0,\front{\eta}-1]\cap\mathbb{Z}$.
The optimization module extracts the first $\back{\eta}=\text{min}(\front{\eta},\maxop{\eta})$ trajectories, where $\maxop{\eta}$ is an integer chosen by the user. It optimizes the trajectory coefficients $\aug{\mathbf{E}}_l$, $\forall l\in\{0\dots\back{\eta}-1\}$, to obtain a short-term trajectory ending at the intermediate goal
\begin{align}
    \aug{\vb{p}}^{\text{inter}}=\aug{\mathbf{E}}_{\back{\eta}-1}^{\text{in}\top} \vb{g}(T),
\end{align}
where $\aug{\mathbf{E}}^{\text{in}}_l$ represents the initial solution for $\aug{\mathbf{E}}_l$ obtained from the front end. 
Figure \ref{fig: planning} shows the trajectories before and after the optimization.
The objective function is
\begin{align}
    J=\sum_{l=0}^{\back{\eta}-1}\int_{t=0}^T\norm{\aug{\mathbf{E}}_l^\top \vb{g}^{(\trajorder)}(t)}^2dt + \kappa\norm{\aug{\mathbf{E}}_{\back{\eta}-1}^\top \vb{g}(T)-\aug{\vb{p}}^\text{inter}}^2,\label{eq: objective}
\end{align}
in which the first term aims to reduce the aggressiveness by penalizing the magnitude of control input ($\trajorder$-th order derivative of the trajectory). The second term penalizes not reaching the intermediate goal.
$\kappa$ is the weight for the second term.
Setting the intermediate goal as a penalty instead of a hard constraint relaxes the optimization problem, {\cb thus improving the success rate of the optimization.}

The constraints for initial states and zero terminal higher-order states are enforced as
\begin{align}
    &\aug{\mathbf{E}}^\top_{0}\vb{g}^{(\alpha)}(0)=\aug{\vb{p}}^{\text{in}\lrangle{\alpha}}, \forall\alpha\in\{0,1,2\},\label{eq: initialcond}\\
    &\aug{\mathbf{E}}^\top_{\back{\eta}-1}\vb{g}^{(\alpha)}(T)=\vb{0}, \forall\alpha\in\{1,2\},\label{eq: terminalcond}
\end{align}
where $\aug{\vb{p}}^{\text{in}\lrangle{\alpha}}=\init{\aug{\vb{p}}},\init{\aug{\vb{v}}},\init{\aug{\vb{a}}}$ for $\alpha=0,1,2$, respectively. 
We would like the robot to stop at the intermediate goal. In case the robot cannot find a feasible trajectory for the next few iterations, it can decelerate and stop at a temporarily safe location until it reaches a feasible solution again. 
{\cb We also need to ensure the continuity between consecutive trajectories:}
\begin{align}
    \aug{\mathbf{E}}^\top_{l}g^{(\alpha)}(T)=\aug{\mathbf{E}}^\top_{l+1}\vb{g}^{(\alpha)}(0),\forall \alpha\in\{0,1,2\},l\in\{0,\dots\back{\eta}-2\}.\label{eq: continuity}
\end{align}

The dynamic feasibility constraints are enforced using the control points since the velocity and acceleration control points can be expressed as the linear combinations of the trajectory coefficients. The constraint equations are in the same form as Eqn. (\ref{eq: poscontrolpoint}-\ref{eq: acccontrolpoint}), but enforced in three dimensions.

For non-collision constraints, we apply the plane separation technique introduced in \cite{tordesillas2021mader} to the 2-D case.
{\cb The minimum set of vertices of the inflated convex hull enclosing the collaborating robot/obstacle $j$ at the planning interval $l$ is denoted as $\Theta_{j,l}$ (the vertices of the obstacles are constant with respect to $l$}).
A line parameterized by $\gbf{\pi}_{j,l}\in\mathbb{R}^2$ and $d_{j,l}\in\mathbb{R}$ is used to separate the vertices of the obstacle or robot $j$ and those of the planning robot using inequalities
\begin{align}
    &\gbf{\pi}_{j,l}^\top\gbf{\theta}+d_{j,l}>0,\; \forall\gbf{\theta}\in\Theta_{j,l},\label{eq: separate1}\\
    &\gbf{\pi}_{j,l}^\top\posctrlpt+d_{j,l}<0,\;\forall\posctrlpt\in\mathcal{Q}_{l},\label{eq: separate2}
\end{align}
$\forall j\in\myset{I}_{m+n}\backslash i$, $l\in\{0,\dots, \back{\eta}-1\}$. 

It is also necessary to add non-entanglement constraints in the optimization to prevent the robot from taking an unallowable shortcut. In Figure \ref{fig: planning}, the dotted rainbow curve is an unallowable shortcut because robot $i$ should not cross $r_{j,1}$ in this case.
{\cb A non-crossing constraint can be added in a similar way to (\ref{eq: separate1}-\ref{eq: separate2}) when using a line to separate the vertices of the non-crossing virtual segment from those of the planning robot's trajectory.}

The overall optimization problem 
is a nonconvex quadratically constrained quadratic program (QCQP) if we are optimizing over both the trajectory coefficients $\aug{\mathbf{E}}_l$ and the separating line parameters, $\gbf{\pi}_{j,l}$ and $d_{j,l}$.
To reduce the computational burden, we fix the values of the line parameters by solving (\ref{eq: separate1}-\ref{eq: separate2}) using the trajectory coefficients obtained from the front-end planner, $\init{\aug{\mathbf{E}}_l}$.
{\cb Hence, the optimization problem with only the trajectory coefficients as the decision variables is
\begin{align}
    &\!\min_{\aug{\mathbf{E}}_l}\sum_{l=0}^{\back{\eta}-1}\int_{t=0}^T\norm{\aug{\mathbf{E}}_l^\top \vb{g}^{(\trajorder)}(t)}^2dt + \kappa\norm{\aug{\mathbf{E}}_{\back{\eta}-1}^\top \vb{g}(T)-\aug{\vb{p}}^\text{inter}}^2,\nonumber\\
    &\text{s.t.}\; \nonumber\\
    &\posctrlpt\in\aug{\mathcal{W}},\; \forall\posctrlpt\in\aug{\mathcal{Q}}_l,\forall l,\nonumber\\
    &[\minop{v}^\text{x}\;\minop{v}^\text{y}\;\minop{v}^\text{z}]^\top\leq\vb{v}\leq[\maxop{v}^\text{x}\;\maxop{v}^\text{y}\;\maxop{v}^\text{z}]^\top,\; \forall\vb{v}\in\aug{\mathcal{V}}_l,\forall l,\nonumber\\
    &[\minop{a}^\text{x}\;\minop{a}^\text{y}\;\minop{a}^\text{z}]^\top\leq\vb{a}\leq[\maxop{a}^\text{x}\;\maxop{a}^\text{y}\;\maxop{a}^\text{z}]^\top,\; \forall\vb{a}\in\aug{\mathcal{A}}_l,\forall l,\nonumber\\
    &\aug{\mathbf{E}}^\top_{0}\vb{g}^{(\alpha)}(0)=\aug{\vb{p}}^{\text{in}\lrangle{\alpha}}, \forall\alpha\in\{0,1,2\},\nonumber\\
    &\aug{\mathbf{E}}^\top_{\back{\eta}-1}\vb{g}^{(\alpha)}(T)=\vb{0}, \forall\alpha\in\{1,2\},\nonumber\\
    &\aug{\mathbf{E}}^\top_{l}g^{(\alpha)}(T)=\aug{\mathbf{E}}^\top_{l+1}\vb{g}^{(\alpha)}(0),\forall \alpha\in\{0,1,2\},\forall l\in 0\cup\myset{I}_{\back{\eta}-2},\nonumber\\  
    &\gbf{\pi}_{j,l}^\top\gbf{\theta}+d_{j,l}>0,\; \forall\gbf{\theta}\in\Theta_{j,l},\forall j \in \myset{I}_{m+n}\backslash i, \forall l,\nonumber\\
    &\gbf{\pi}_{j,l}^\top\posctrlpt+d_{j,l}<0,\;\forall\posctrlpt\in\mathcal{Q}_{l},\forall j \in \myset{I}_{m+n}\backslash i,\forall l.\nonumber  
\end{align}
For brevity, in the above, $\forall l$ indicates $\forall l\in 0\cup\myset{I}_{\back{\eta}-1}$. The optimization problem is a quadratic program with a much lower complexity.}

\section{Complexity Analysis}
\subsection{Complexity of Homotopy Update} \label{subsec: complexity homotopy}
We analyze the time complexity of the homotopy update procedure (Algorithm \ref{alg: homotopyupdate}). 
We define $\maxop{\omega}$ to be the maximum possible number of obstacle-related entries, $o_{j,f}$, in $h(k)$ at any time $k$. 
$\maxop{\omega}$ depends on the cable length and the obstacle shapes (the minimum distance between any two vertices of the obstacles). It is generally independent of the number of obstacles in the workspace.
As the robots avoid entanglements based on the two-entry rule, the maximum number of robot-related entries, $r_{j,f}$, $j\in\myset{I}_n$, $f\in\{0,1\}$, in $h(k)$ should be $n$.
The time complexity of updating the word (Algorithm \ref{alg: crossing}) is dominated by the number of crossing checks.
{\cb The virtual segments of each robot are defined by at most $\maxop{\omega}$ contact points, and crossing $m$ static obstacle can be checked in $\bigO(m)$ time.
Hence, the update of the word can be conducted in $\bigO(n\maxop{\omega}+m)$ time.
To implement the reduction procedure (Algorithm \ref{alg: reduction}), three nested loops are required to inspect each entry of $h(k)$. Hence, the time complexity of the reduction procedure is $\bigO((n+\maxop{\omega})^3)$.}
Similarly, the contact point update procedure (Algorithm \ref{alg: funnel}) can be run in $\bigO(n+\maxop{\omega})$ time.
Therefore, the time complexity of the entire homotopy update procedure is $\bigO(m+(n+\maxop{\omega})^3)$.

The memory complexity of the homotopy update depends on the storage of all the virtual segments. Hence, it is $\bigO(n\maxop{\omega}+m)$.

\subsection{Complexity of a Planning Iteration} \label{subsec: search complexity}
Both the time complexities of the front-end kinodynamic planning (Section \ref{sec: search}) and the backend optimization (Section \ref{sec: opt}) are analyzed.
The time complexity of the kinodynamic search is the product of the total number of candidate trajectories (successive nodes) generated and the complexity of evaluating each trajectory.
In the worst case, a total of $\bigO(\epsilon(n+\maxop{\omega})\sigma_\text{u})$ candidate trajectories are generated in the homotopy-augmented graph, where $\epsilon$ is the total number of grids in the workspace $\myset{W}$.
$\epsilon$ is dependent on the total area of the workspace and the grid size used for discretization.
For each candidate trajectory, checking the workspace and dynamic feasibility (\ref{eq: poscontrolpoint})-(\ref{eq: acccontrolpoint}) can be done in $\bigO(1)$ time.
The collision avoidance requirement (\ref{eq: collision avoidance}) can be checked in $\bigO(m+n)$ time.
The time complexity of checking cable-related requirements (Algorithm \ref{alg: entangle}) is dominated by the homotopy update procedure (line 5), which has to be executed $\sigma$ times;
therefore, the time complexity of Algorithm \ref{alg: entangle} is $\bigO(\sigma (m+(n+\maxop{\omega})^3))$.
Hence, the worst-case time complexity of the kinodynamic search is
$\bigO(\epsilon\sigma_\text{u}\sigma (n+\maxop{\omega})(m+(n+\maxop{\omega})^3))$.

The time complexity of solving the optimization problem using the interior point method is $\bigO((\maxop{\eta}\trajorder)^3)$ \cite{ye1989extension}, as there are $3\maxop{\eta}(\trajorder+1)$ optimization variables.
Therefore, the time complexity of one planning iteration is $\bigO((\maxop{\eta}\trajorder)^3+\epsilon\sigma_\text{u}\sigma (n+\maxop{\omega})(m+(n+\maxop{\omega})^3))$.
Given that the independent parameters $\epsilon$, $\maxop{\omega}$, $\sigma$, $\sigma_\text{u}$, $\maxop{\eta}$ and $\trajorder$ are fixed, the time complexity simplifies to $\bigO(nm+n^4)$, which is linear in the number of obstacles and quartic in the number of robots.

The memory complexity of planning is dominated by the memory to store the valid graph nodes for the kinodynamic search. Each node has a memory complexity of $\bigO(n+\maxop{\omega}+\trajorder)$ due to the need to store the trajectory coefficients and the homotopy representation.
Adding the memory to store a copy of the virtual segments, the memory complexity of a planning iteration is $\bigO(\epsilon(n+\maxop{\omega})(n+\maxop{\omega}+\trajorder)+m)$, which can be simplified to $\bigO(n^2+m)$.

\subsection{Communication Complexity}
In the message transmitted by each robot per iteration, {\cb the lengths of the robot's position,  the list of contact points, and the robot's future trajectory are $\bigO(1)$, $\bigO(\maxop{\omega})$, and $\bigO(\maxop{\eta}\trajorder)$, respectively.}
Considering a planning frequency of $f_\text{com}$, the amount of data transmitted by each robot is $\bigO(f_\text{com}(\maxop{\omega}+\maxop{\eta}\trajorder))$ per unit time, 
while the data received by each robot per unit time is $\bigO(nf_\text{com}(\maxop{\omega}+\maxop{\eta}\trajorder))$.

\section{Simulation}\label{sec: simulation}
The proposed multi-robot homotopy representation and the planning framework are implemented in C++ programming language.
During the simulation and experiments, each robot runs an independent program under the framework of Robot Operating System (ROS). The communication between robots is realized by the publisher-subscriber utility of the ROS network.
The trajectory optimization described in Section \ref{sec: opt} is solved using the commercial solver Gurobi\footnote{https://www.gurobi.com/}.
The processor used for simulations in Sections \ref{subsec: single} and \ref{subsec: multi without} is Intel i7-8750H while that used in Section \ref{subsec: multi with} is Intel i7-8550U. 
The codes for the compared methods are also implemented by ourselves in C++ and optimized to the best of our ability.
In all simulations, the parameters are chosen as $T=0.5$ s, $\maxop{\vb{v}}=[2.0, 2.0, 2.0]^\top$ m/s, $\maxop{\vb{a}}=[3.0, 3.0, 3.0]^\top$ $\text{m/}\text{s}^2$, $\maxop{u}=5.0$ $\text{m/}\text{s}^3$, $\minop{\vb{v}}=-\maxop{\vb{v}}$, $\minop{\vb{a}}=-\maxop{\vb{a}}$. The grid size for the kinodynamic graph search is $0.2\times0.2$ m. 
Video of the simulation can be viewed in the supplementary material or online\footnote{https://youtu.be/8b1RlDvQsi0}.

\subsection{\cb Single Robot Trajectory Planning}\label{subsec: single}
\begin{figure}[!t]
\centering
\includegraphics[width=\linewidth]{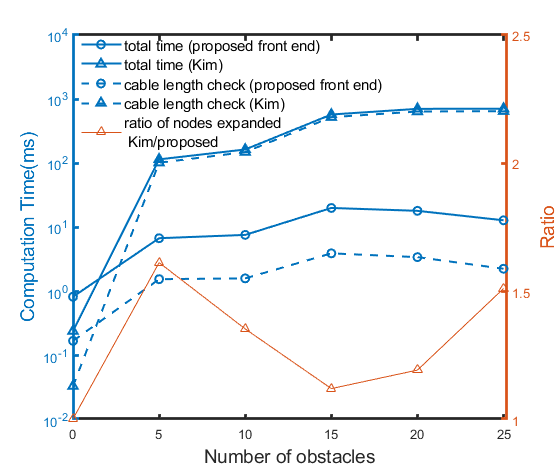}
\caption{\footnotesize Comparison of computation time for single-robot planning. The proposed front end is significantly faster than Kim's approach.}
\label{fig: timecompare}
\end{figure}

\begin{figure}[!t]
\centering
\includegraphics[width=\linewidth]{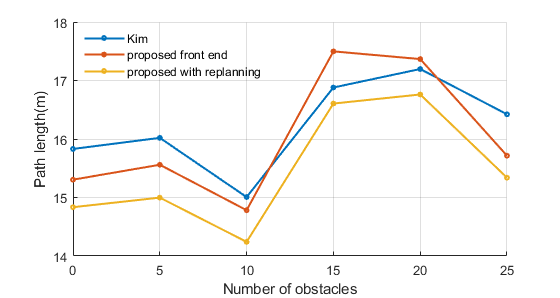}
\caption{\footnotesize Comparison of path/trajectory lengths for single-robot planning. The proposed planning framework with replanning achieves a shorter trajectory length compared to the proposed front end only and Kim's approach.}
\label{fig: lengthcompare}
\end{figure}
\begin{figure}[!t]
\centering
\includegraphics[width=0.8\linewidth]{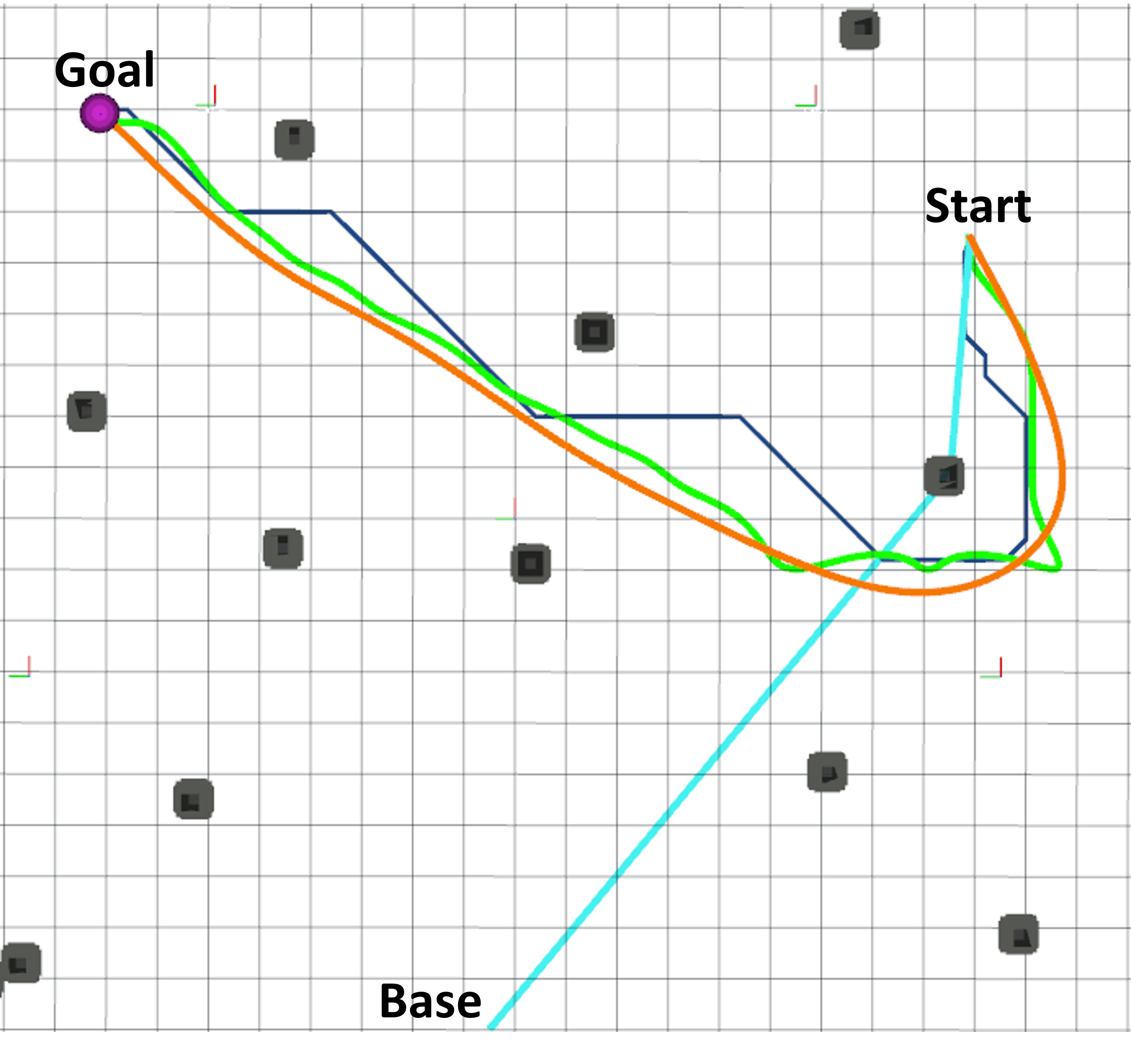}
\caption{\footnotesize The cyan line shows the shortest cable configuration at the start. The blue path is generated by Kim's method, the green curve is the trajectory generated from the proposed front-end trajectory searching. The orange curve represents the actual trajectory executed using the proposed planning framework.}
\label{fig: pathcompare}
\end{figure}

We first apply the presented approach to the planning problem for a single tethered robot in an obstacle-rich environment.
{\cb Specifically, the front-end kinodynamic trajectory search technique (Section \ref{sec: search}) is used for point-to-point trajectory generation and the entire framework (Section \ref{sec: formulation}) is used for online trajectory replanning.}
We compare our approach with Kim\cite{kim2014path} which generates piece-wise linear paths using homotopy-augmented grid-based A*.
The simulation environment is a $30\times30$ m 2-D area with different numbers of randomly placed obstacles.
{\cb The grid size for Kim's graph planner is set to two times the grid size of the proposed kinodynamic planner to ensure comparable amount of expanded graph nodes for both planners. When using the same grid size, the kinodynamic planner usually expands much fewer nodes than a purely grid-based A*, because the successive trajectories do not necessarily fall in neighbouring nodes.}
We randomly generate $100$ target points in the environment. Using both the proposed front-end trajectory searching and Kim's method, we generate paths that transit between target points (running each method once for every target point).
Figure \ref{fig: timecompare} shows the average computation time for both approaches and the ratio of the average number of nodes expanded in Kim's method to that in the proposed method. 
Although the numbers of the expanded nodes are comparable for both approaches, {\cb the computation time for Kim's method is at least an order of magnitude longer than our method except when no obstacle exists }(which is equivalent to planning without a tether).
In Kim's approach, a large proportion of time is spent on checking the cable length requirement for a robot position, which uses the expensive curve shortening technique discussed in Section \ref{sec: prelim}.
In comparison, our contact point determination procedure consumes much less time and hence contributes to the efficiency of the proposed method.
Figure \ref{fig: lengthcompare} shows the average path/trajectory lengths for both approaches and the average lengths of trajectories actually executed by the robot using the proposed planning framework with online trajectory replanning.
We observe that both kinodynamic trajectory searching and grid-based graph planning generate paths with comparable lengths.
{\cb The paths generated using Kim's method are optimal in the grid map, 
while the trajectories generated by our kinodynamic planner are in the continuous space (not required to pass through the grid center) and hence can have shorter lengths.}
The fast computation of the trajectory searching enables frequent online replanning which further refines the trajectory to be shorter and smoother.
Figure \ref{fig: pathcompare} shows one planning instance, where the actual trajectory (orange curve, $27.7$ m) is shorter and smoother than the paths generated from Kim's method (blue line, $28.3$ m) and kinodynamic search (green curve, $28.7$ m).

\subsection{\cb Multi-Robot Trajectory Planning without Static Obstacles}\label{subsec: multi without}
{\cb Static obstacles were not considered in all of the existing works on tethered multi-robot planning.}
Hence, we consider an obstacle-free 3-D environment where multiple UAVs are initially equally spaced on a circle of radius $10$ m, {\cb with $100$ sets of random goals sent to the robots.
A mission is successful when all robots reach their goals.}
We compare our planning framework with Hert's centralized method \cite{hert1999motion} that generates piece-wise linear paths for each robot.
In Hert's original approach, a point robot is considered, but we modify the approach to handle non-zero radii of the robots for collision avoidance. 
We also change the shortest path finding algorithm from a geometric approach \cite{PAPADIMITRIOU1985259} to grid-based A* for efficiency.

\begin{figure}[!t]
\centering
\includegraphics[width=\linewidth]{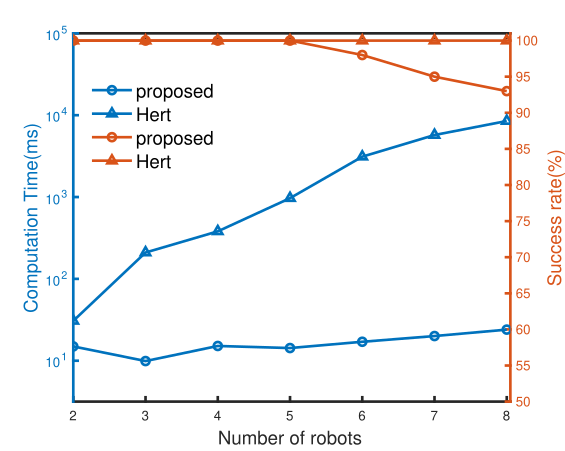}
\caption{\footnotesize   Comparison of computation time and success rate for multi-robot planning.}
\label{fig: mtlp_compare}
\end{figure}

\begin{figure}[!t]
\centering
\includegraphics[width=0.75\linewidth]{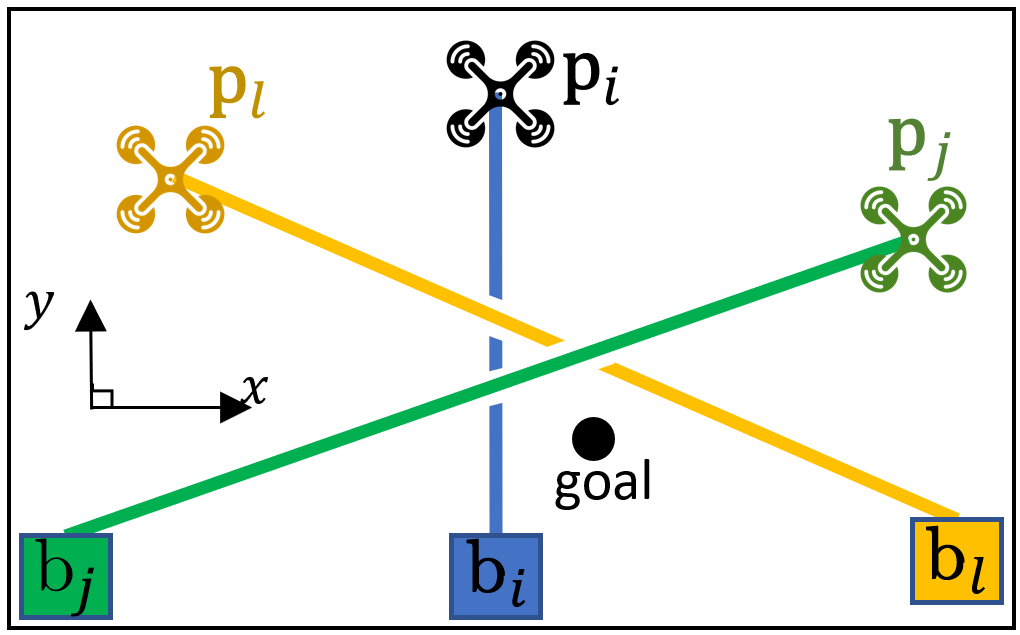}
\caption{\footnotesize   Illustration of a deadlock situation. Robot $i$'s cable has been crossed by both robot $j$ and robot $l$. The word of robot $i$ is $h_i=r_{l,0}r_{j,0}$. Unless robot $j$ and robot $l$ move, any route planned by robot $i$ to reach its goal will fail the entanglement check based on the two-entry rule.}
\label{fig: deadlock}
\end{figure}

Figure \ref{fig: mtlp_compare} plots the average computation time and the success rates of both approaches.
The computation time for our approach refers to the time taken by a robot to generate a feasible trajectory in one planning iteration.
As seen, for more than $2$ robots, the computation time for Hert's approach is at least an order of magnitude longer compared to our approach.
While only a small increase in computation time is observed in our approach, the computation time for Hert's approach increases significantly from $30.7$ ms for $2$ robots to $8.49$ s for $8$ robots.
We note that the implementation of Hert's approach by us has already achieved significant speedup compared to the results in \cite{hert1999motion} (more than $1000$ s for $5$ robots), likely due to the use of modern computational geometry libraries, a more efficient pathfinding algorithm, and a faster processor.
{\cb Our approach is of $\bigO(n^4)$ time complexity, which is consistent with Hert's approach.
However, due to a tight and straight cable model, in Hert's method, intersections among cables must be checked for all potential paths, which is computationally expensive.}

The success rate of Hert's approach is consistently $100\%$, while our approach fails occasionally for more than $5$ robots.
However, this does not indicate less effectiveness of our approach, because the cable models and the types of entanglements considered are different in both approaches; the solution from one approach may not be a feasible solution for the other approach.
The failures of our method are due to the occurrences of deadlocks. 
{\cb A common deadlock is illustrated in Figure \ref{fig: deadlock}, where all possible routes for robot $i$ to reach its goal fail the entanglement check.
This is a drawback of decentralized planning in which the robots only plan their own trajectories and do not consider whether feasible solutions exist for the others.}

\begin{figure}[!t]
\centering
\includegraphics[width=\linewidth]{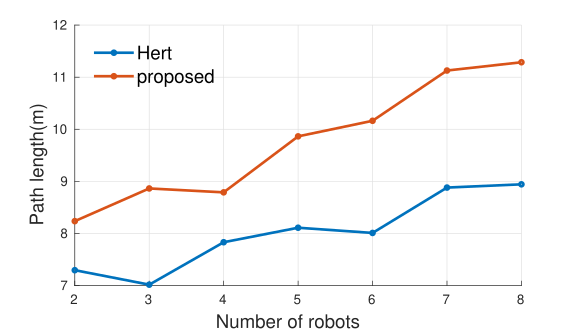}
\caption{\footnotesize  Comparison of path/trajectory lengths for multi-robot planning.}
\label{fig: mtlp_length_compare}
\end{figure}

{\cb The average path length of each robot is shorter for Hert's approach, as shown in Figure \ref{fig: mtlp_length_compare}.}
The cable model considered in Hert's work allows a robot to move vertically below the cables of other robots while avoiding contacts, 
while our approach restricts such paths and requires the robots to make a detour on the horizontal plane when necessary.
This difference is illustrated in Figure \ref{fig: mtlp},
where the blue path is generated using Hert's method while the orange curve is the trajectory generated using our approach.
{\cb In practice, it is difficult to ensure the full tightness and straightness of a cable.} Hence, moving below a cable presents a greater risk of collision than moving horizontally.

\begin{figure}[!t]
\centering
\includegraphics[width=0.9\linewidth]{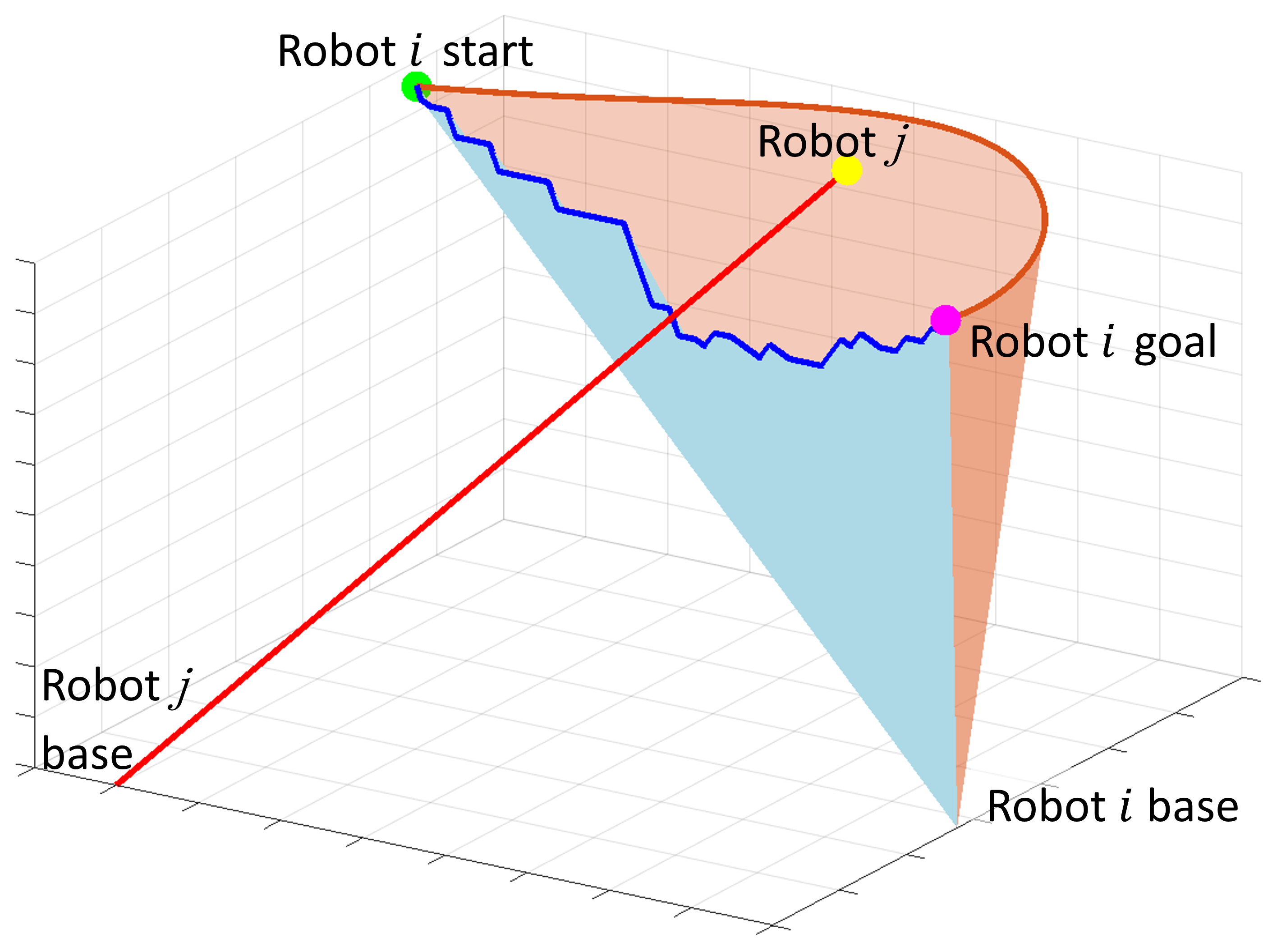}
\caption{\footnotesize Robot $i$ starts at the green point and reaches the magenta point. 
The blue path is generated using Hert's method while the orange curve is the trajectory of the robot using our approach.
The red line simulates the cable of robot $j$ if the cable is fully tight.
The light blue and orange sections are the areas swept by the cable of robot $i$ during its movement from start to goal, considering a tight cable model.
In this case, both approaches are able to generate paths that avoid intersection between straight cables, but Hert's approach requires robot $i$ to move below the cable of robot $j$.  }
\label{fig: mtlp}
\end{figure}

\subsection{\cb Multi-Robot Trajectory Planning with Static Obstacles}\label{subsec: multi with}
\begin{figure*}
	\centering
    \subcaptionbox{ \label{fig: cir_1}}[0.32\linewidth]{\includegraphics[width=\linewidth]{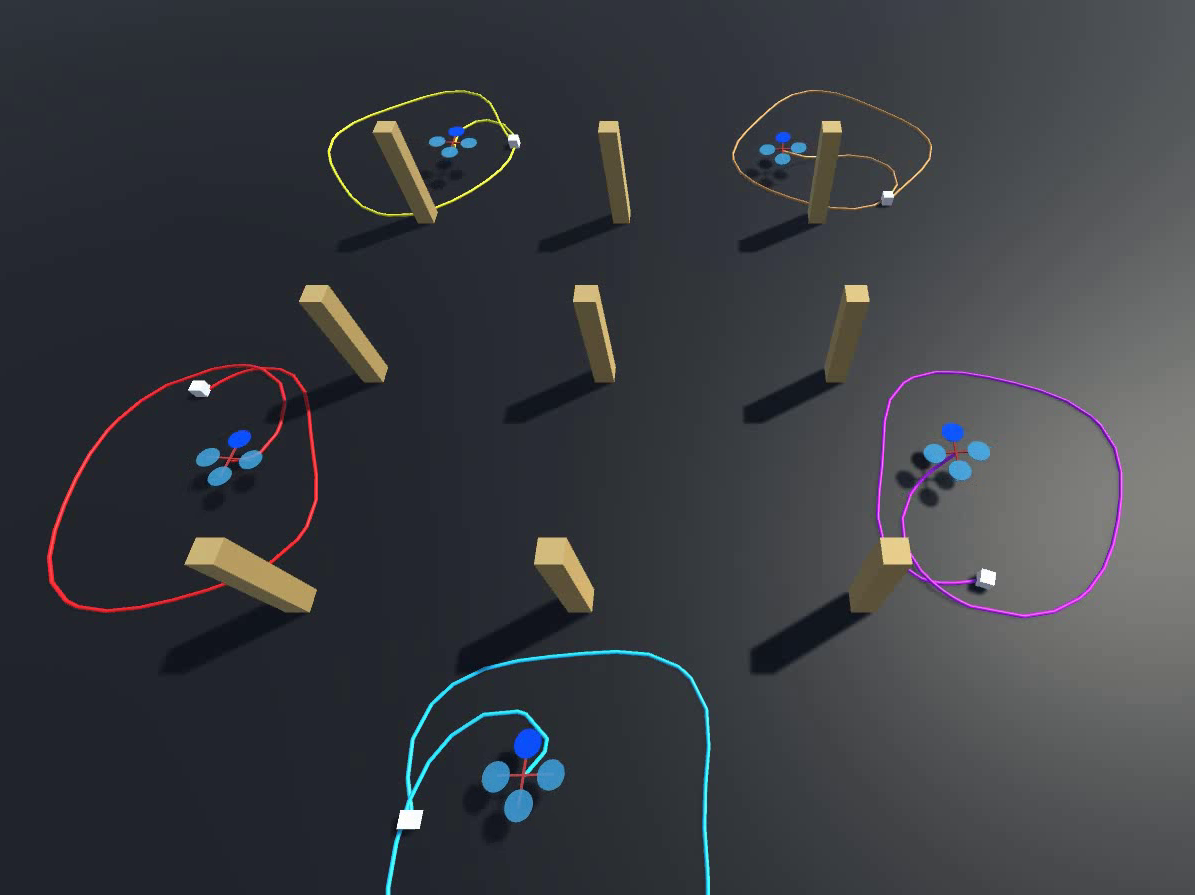}}
    \subcaptionbox{ \label{fig: cir_2}}[0.32\linewidth]{\includegraphics[width=\linewidth]{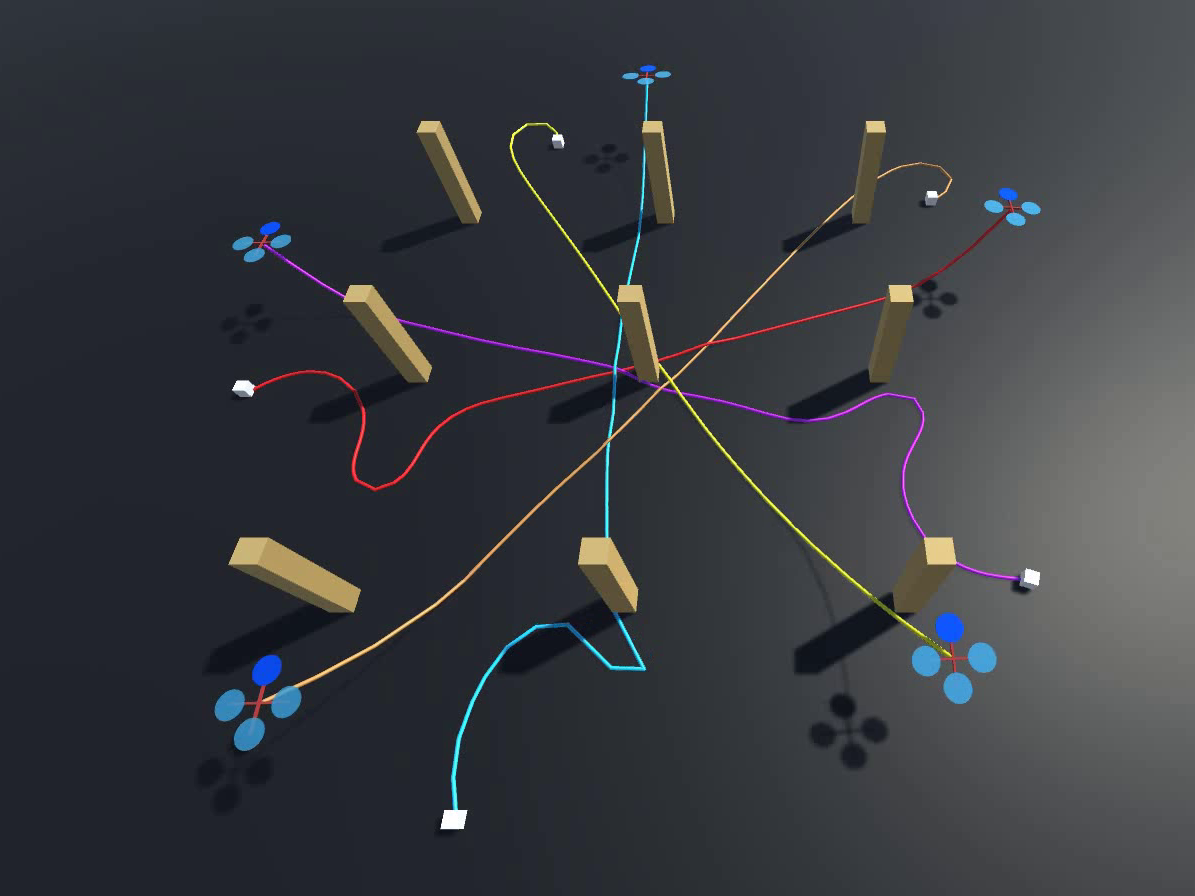}}
    \subcaptionbox{ \label{fig: cir_3}}[0.32\linewidth]{\includegraphics[width=\linewidth]{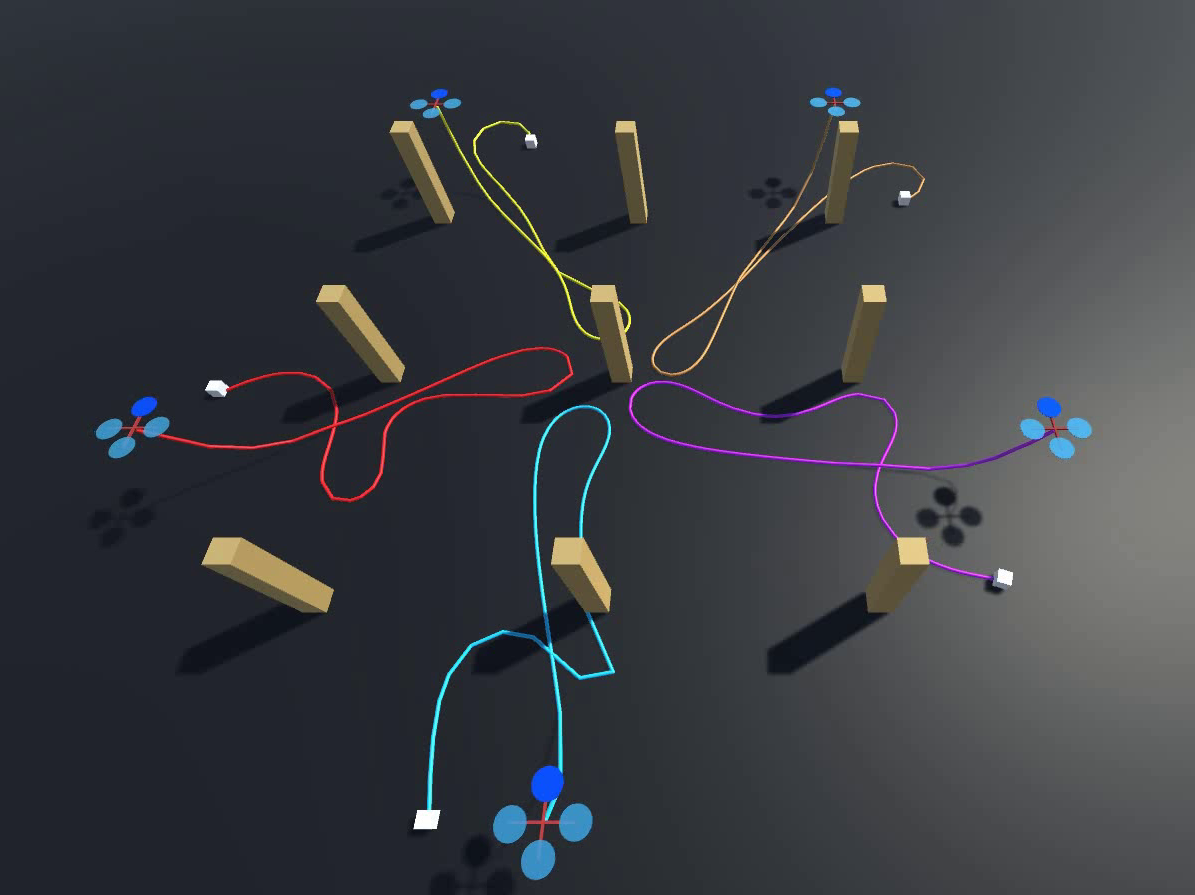}}
	\caption{ \footnotesize 5 UAVs planning in a workspace with $9$ static obstacles. (a) UAVs take off from their starting positions. (b) UAVs reach their targets opposite to their starting positions on the circle. (c) UAVs return to their starting positions without entanglements.}  \label{fig: multicircle}
\end{figure*}
We conduct simulation of multiple tethered UAVs working in an environment placed with $9$ square obstacles, as shown in Figure \ref{fig: multicircle}.
The starting positions of the UAVs are evenly distributed on a circle of radius $10$ m. Each UAV is tasked to move to the opposite point on the circle and then move back to its starting point, hence crossing of other UAVs' cables and passing through obstacles are inevitable.
{\cb The mission is considered successful if all robots return to their starting points at the end of the mission.}
UAV and cable dynamics are simulated using Unity game engine with AGX Dynamics physics plugin\footnote{https://www.algoryx.se/agx-dynamics/}; collisions among cables, UAVs and static obstacles are simulated to detect contacts and entanglements. 
Forces and torques commands for UAV are computed in ROS using the trajectory output from the proposed planner and the UAV states obtained from Unity.
We conduct $30$ simulation runs for numbers of robots ranging from $2$ to $8$ and record the computation time of each execution of front-end planning and back-end optimization, {\cb as well as the total time taken for each simulation.}
The results are shown in Table \ref{tab: multiple}. {\cb It is observed that}:
\begin{table}
\centering
\begin{tabular}{|c|c|c|c|c|}
\hline
\begin{tabular}[c]{@{}c@{}}Num. \\ of \\ Robots\end{tabular} & \begin{tabular}[c]{@{}c@{}}Avg. Comp. \\ Time \\ Front End \\ (ms)\end{tabular} & \begin{tabular}[c]{@{}c@{}}Avg Comp. \\ Time \\ Back End \\ (ms)\end{tabular} & \begin{tabular}[c]{@{}c@{}}Avg Time\\ per \\ Mission \\ (s)\end{tabular} & \begin{tabular}[c]{@{}c@{}}Mission\\ Success \\ Rate \\ (\%)\end{tabular} \\ \hline
$2$                                                          & $11.11$                                                                         & $19.40$                                                                       & $32.91$                                                                  & $100$                                                                     \\ \hline
$3$                                                          & $10.74$                                                                         & $24.15$                                                                       & $29.86$                                                                  & $100$                                                                     \\ \hline
$4$                                                          & $18.61$                                                                         & $24.84$                                                                       & $38.26$                                                                  & $100$                                                                     \\ \hline
$5$                                                          & $20.30$                                                                         & $22.91$                                                                       & $43.37$                                                                  & $100$                                                                     \\ \hline
$6$                                                          & $30.31$                                                                         & $21.22$                                                                       & $55.85$                                                                  & $83.3$                                                                    \\ \hline
$7$                                                          & $30.50$                                                                         & $21.82$                                                                       & $61.62$                                                                  & $93.3$                                                                    \\ \hline
$8$                                                          & $38.79$                                                                         & $22.87$                                                                       & $69.81$                                                                  & $80.0$                                                                    \\ \hline
\end{tabular}
\caption{Results for Multiple Tethered UAVs Planning}
\label{tab: multiple}
\end{table}
\begin{itemize}
  \item {\cb For front-end trajectory finding, the computation time increases with the number of robots.
  This is mainly due to the increasing possibility of blocking the routes by the collaborating robots or their cables} (the route can be virtually blocked by a cable if crossing this cable is risking entanglement), causing the planner to take a longer time to find a detour and a feasible trajectory.
  However, the increase in computation time is small ($<30$ ms) and the planner still satisfies the real-time requirement for $8$ robots.
  \item The back-end optimization has relatively consistent computation time ($\sim 22$ ms), because the number of polynomial trajectories to be optimized is fixed regardless of the number of robots.
  \item The average computation time for one planning iteration (including both the front end and the back end) is well below $100$ ms and suitable for real-time replanning during flights.
  \item {\color{black}The time to complete the mission increases with the number of robots because more time is spent on waiting for other robots to move so that the cables no longer block the only feasible route.}
  \item The planner achieves $100\%$ mission success rate for numbers of robots less than or equal to $5$.
  As the number of robots increases, 
  the success rate drops but still maintains above $80\%$ for $8$ robots.
  Similar to the cases in Section \ref{subsec: multi without}, the failures are due to deadlocks, which are more likely to occur in a cluttered environment.
\end{itemize}

In all of the simulation runs, no entanglements are observed, showing the effectiveness of the proposed homotopy representation and the two-entry rule in detection and prevention of entanglements. However, the proposed two-entry rule is conservative in evaluating the risk of entanglements. For example, in reality, {\cb robot $i$ in Figure \ref{fig: deadlock} may be able to reach its goal with a long cable}; however, such a route is prohibited by the two-entry rule because it has to cross the cables of robot $j$ and $l$.
In essence, the proposed method sacrifices the success of a mission to guarantee safety, which is reasonable in many safety-critical applications.
{\color{black}Additional features may be implemented to improve the success rate and resolve deadlocks. For example, in Figure \ref{fig: deadlock}, robot $i$ may request other robots to uncross its cable before moving to the goal. Alternatively, feasible trajectories of the robots can be computed in a centralized manner, before they are sent to each robot for optimisation. Nevertheless, such an approach will inevitably be more computationally expensive.}

Overall, the proposed planning framework is shown to be capable of real-time execution and effective in preventing entanglement for different numbers of robots.

\section{Flight Experiments}
We conduct flight experiments using $3$ self-built small quadrotors in a $6\times6$ m indoor area. {\cb Each quadrotor is attached to a cable with a length of $7$ m connected to a ground power supply.}
Each quadrotor is equipped with an onboard computer with an Intel i7-8550U CPU, running the same ROS program as described in Section \ref{sec: simulation}.
All onboard computers are connected to the same local network through Wi-Fi.
A robust tracking controller \cite{Cai2011} is used to generate attitude and thrust commands from the target trajectory, which are sent to the low-level flight controller using DJI Onboard SDK.
The parameters for the planner are chosen as $T=0.3$ s, $\maxop{\vb{v}}=[0.7, 0.7, 0.7]^\top$ m/s, $\maxop{\vb{a}}=[2.0, 2.0, 2.0]^\top$$\text{m/}\text{s}^2$, $\maxop{u}=3.5$$\text{m/}\text{s}^3$,  $\maxop{\eta} = 8$, $\sigma^\text{u}=2$, $\sigma = 3$,  and the grid size for the kinodynamic graph search is $0.05\times0.05$ m. 
The rate of planning is $10$Hz.
The quadrotors are commanded to shuttle between two positions in the workspace, as illustrated in Figure \ref{fig: exp}, which resembles an item transportation task in a warehouse scenario.
The supplementary video shows an experiment in which each robot completes $15$ back-and-forth missions without incurring any entanglement.
{\cb The tethered power supply enables longer mission duration than the $2$-minutes flight time of the quadrotor under battery power.}
Figure \ref{fig: exp_plot} shows the command trajectories and the actual positions of a UAV during part of the experiment. The generated command trajectories show high smoothness in both X and Y axes, {\cb thus enabling good tracking performance of the robots}.
\begin{figure}[!t]
\centering
\includegraphics[width=\linewidth]{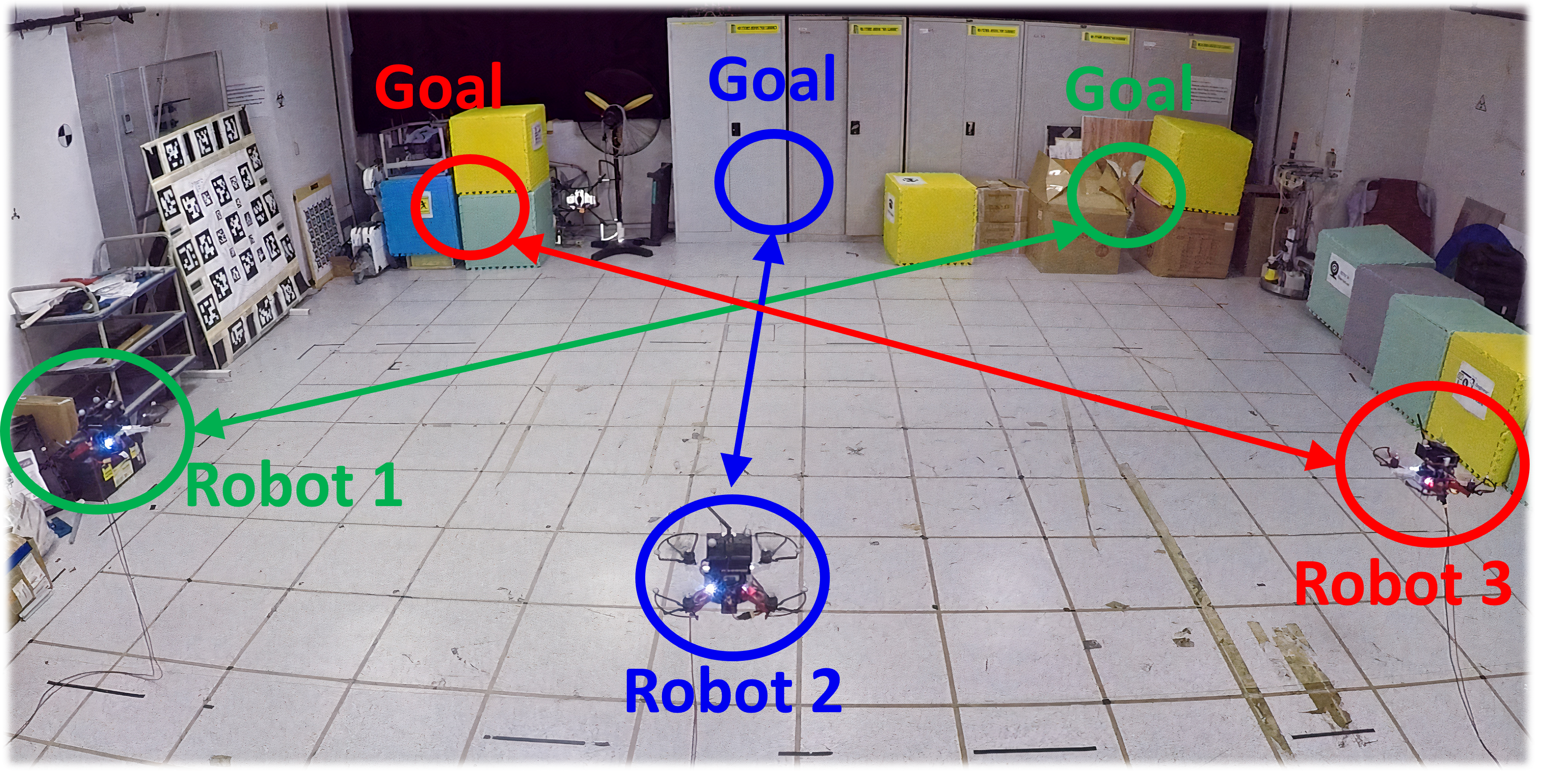}
\caption{\footnotesize Experiment using $3$ tethered UAVs.}
\label{fig: exp}
\end{figure}

\begin{figure}[!t]
\centering
\includegraphics[width=\linewidth,trim={0.6cm 0.2cm 0.2cm 0cm}]{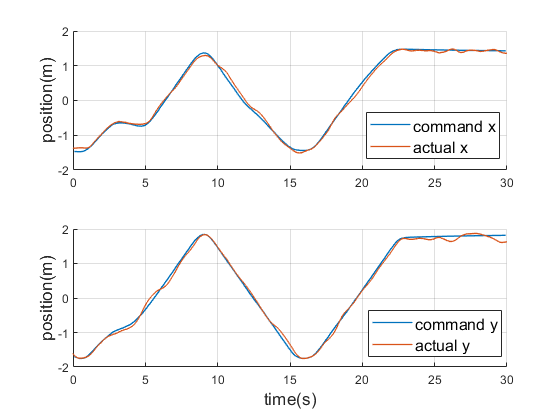}
\caption{\footnotesize Plot of command trajectories and actual positions of a UAV during the flight experiment.}
\label{fig: exp_plot}
\end{figure}

\section{Conclusion}
In this work, we presented NEPTUNE, a complete solution for the trajectory planning for multiple tethered robots in an obstacle-ridden environment.
Central to the approach is a multi-robot tether-aware representation of homotopy, which encodes the interaction among the robots and the obstacles in the form of a word, and facilitates the computation of contact points to approximate the shortened cable configuration. 
{\cb The front-end trajectory finder leverages the proposed homotopy representation to discard trajectories risking entanglements or exceeding the cable length limits.} The back-end trajectory optimizer refines the initial feasible trajectory from the front end.
{\cb Simulations in single-robot obstacle-rich and multi-robot obstacle-free environments showed improvements in computation time compared to the existing approaches.}
Simulation of challenging tasks in multi-robot obstacle-rich scenarios showed the effectiveness in entanglement prevention and the real-time capability.
Flight experiments highlighted the potential of NPETUNE in practical applications using real tethered systems.
Future works will focus on improving the success rate by introducing deadlock-resolving features and applications in a realistic warehouse scenario.

\begin{appendices}

    \section{Explanation of Homotopy Induced by Obstacles in a 3-dimensional Space} \label{apd: homotopy3d}
\begin{figure}[!t]
\centering
\includegraphics[width=\linewidth]{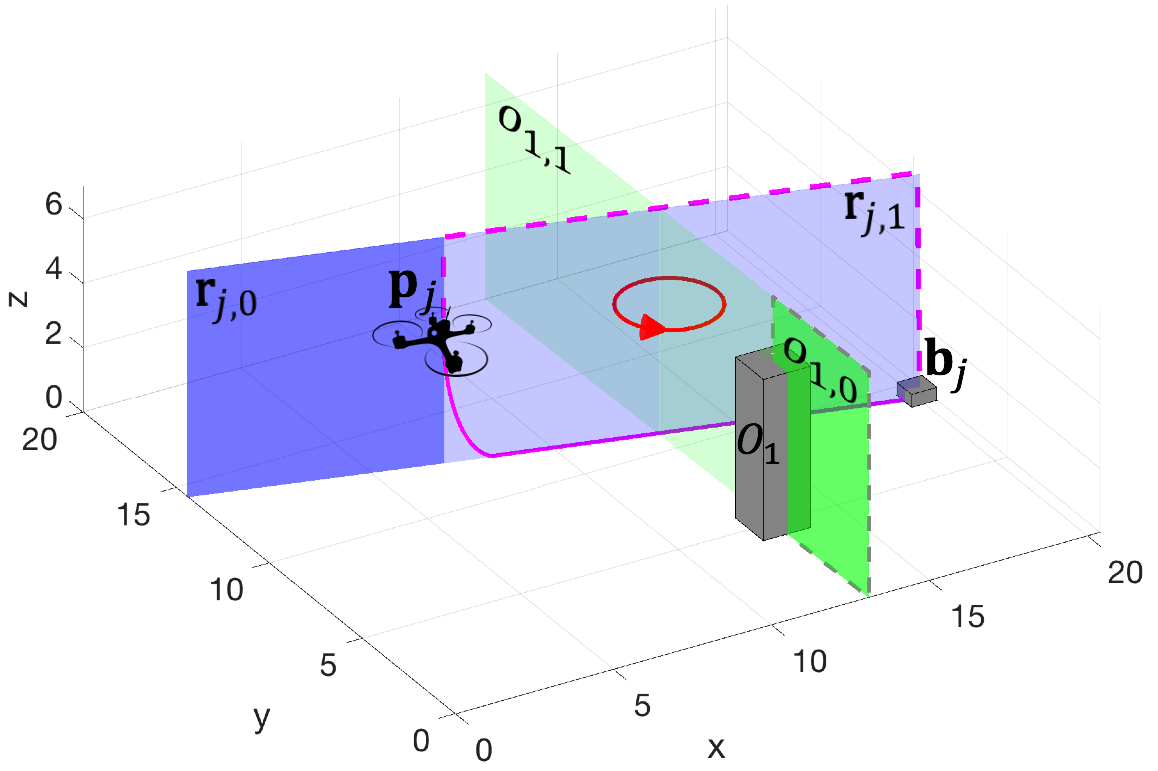}
\caption{\footnotesize A 3-D workspace consisting of a robot $j$ and static obstacle $1$. The solid magenta curve is the cable attached to the drone.}
\label{fig: 3dhomotopy}
\end{figure}
In a 2-D space, different homotopy classes are created due to the presence of obstacles (or punctures) in the space. For example,
a path turning left at an obstacle is topologically different from a path turning right at the obstacle to reach the same goal.
However, in a 3-D space, not all obstacles can induce different homotopy classes; only those containing one or more holes (those with genus equal to or more than 1) are able to do so.
In our case, both the static obstacles and the cables attached to the drones are 3-D obstacles, {\cb but they generally contain no holes.}
Therefore, we manually close those 3-D obstacles using the following procedure.
Since we restrict the planning robot to move above any other robots or obstacles,  the space above them can be considered as virtual obstacles.
We further extend the virtual obstacles along the workspace boundaries until they reach the respective bases or static obstacles.
In Figure \ref{fig: 3dhomotopy}, the magenta and grey dashed lines outline the virtual obstacles created for the robot $j$ and obstacle $1$ respectively.
Each 3-D obstacle, joined with its virtual obstacle, contains a hole,
so that different homotopy classes can be induced by paths passing through the hole and paths passing outside the hole.
Virtual 2-D manifolds can be created, {\cb as shown in Figure \ref{fig: 3dhomotopy} by the blue and green planes}. The sequence of the manifolds being intersected by a path can be recorded to identify the homotopy class of the path.
We can observe the similarity between such a construction in 3-D space and the 2-D method discussed in Section \ref{subsec: lineseg}:
given a 3-D space where all obstacles remain static,
two methods produce the same word for a path that avoids passing above the obstacles and maintains a safe distance to the obstacles.
We also gain an understanding of the loops in the 2-D case by looking at the 3-D case: the loops at the intersection between two 2-D manifolds do not physically wrap around any obstacles, hence they can be contracted to a point topologically.
As shown in Figure \ref{fig: 3dhomotopy}, the red circle that cuts through manifolds $o_{1,1}$, $r_{j,1}$ alternatively is null-homotopic.

\section{Sketch of Proof of Proposition \ref{prop: shortest}}\label{apd: proofshortest}
The proof is based on the fact that, for a path lying in the universal covering space of a workspace consisting of polygonal obstacles, the shortest homotopic path can be constructed from the vertices of the obstacles, and the start and end points \cite{HERSHBERGER199463,lee1984euclidean}.
{\cb Since only thin barrier obstacles are considered in our approach,} the vertices are the surface points $\gbf{\zeta}_{j,f}$, $j\in\myset{I}_m$, $f\in\{0,1\}$.
Under Assumption \ref{ass: initialpos}, a surface point $\gbf{\zeta}_{j,f}$ can become a point on the shortest path only when the corresponding virtual segment $o_{j,f}$ has been crossed. Hence, it is sufficient to check only the surface points of the obstacles in $h(k)$.

{\color{black}
\section{Computing Initial Z-Axis Trajectories}\label{subsec: initialz}
}
The generation of trajectories in Z-axis uses the properties of a clamped uniform b-spline.
A clamped uniform b-spline is defined by its degree $\trajorder$, a set of $\lambda+1$ control points $\{\ctrlptbspline_0, \ctrlptbspline_1, \dots,\ctrlptbspline_{\lambda}\}$ and $\lambda+\trajorder+2$ knots $\{t_0, t_1, \dots t_{\lambda+\trajorder+1}\}$, where $t_0=t_1=\dots=t_p$, $t_{\lambda+1}=t_{\lambda+2}=\dots=t_{\lambda+\trajorder+1}$, and the internal knots $t_\trajorder$ to $t_{\lambda+1}$ are equally spaced. 
It uniquely determines $\lambda-\trajorder+1$ pieces of polynomial trajectories, each of a fixed time interval.
It has the following properties of our interest\cite{zhou2019robust}: (1) the trajectory defined by a uniform clamped b-spline is guaranteed to start at $\ctrlptbspline_0$ and end at $\ctrlptbspline_{\lambda}$; (2) the first $(\trajorder-1)$-th order derivatives (including $0$-th order) at the start and the end of the trajectory uniquely determine the first and last $\trajorder$ control points respectively; (3) {\cb the $\alpha$-th order derivative of the trajectory is contained within the convex hull formed by the $\alpha$-th order control points}, which can be obtained as 
\begin{align}
    \ctrlptbspline_l^{\lrangle{\alpha+1}}=\frac{(\trajorder-\alpha)(\ctrlptbspline^{\lrangle{\alpha}}_{l+1}-\ctrlptbspline^{\lrangle{\alpha}}_l)}{t_{l+\trajorder+1}-t_{l+\alpha+1}}, \forall l\in[0,\lambda-\alpha]\cap\mathbb{Z},\label{eq: bspline}
\end{align}
$\forall\alpha\in[0,\trajorder-1]\cap\mathbb{Z}$, where $\ctrlptbspline_l^{\lrangle{\alpha}}$ denotes the $\alpha$-th order control point and $\ctrlptbspline_l^{\lrangle{0}}=\ctrlptbspline_l$.
Using the above properties, we design an incremental control points adjustment scheme to obtain a feasible Z-axis trajectory.
In our case $\trajorder=3$, we would like to obtain $\maxop{\eta}$ pieces of trajectories, where $\maxop{\eta}$ is user-defined.
{\cb Each trajectory has a time interval $T$ to be consistent with the kinodynamic search, hence $\lambda=\maxop{\eta}+\trajorder-1$.}
We firstly determine the first $3$ control points of the b-spline from the initial states $p^{\text{in},\text{z}}$, $v^{\text{in},\text{z}}$ and $a^{\text{in},\text{z}}$, and set the last $3$ control points equal to the terminal target altitude $p^{\text{term},\text{z}}$.
Then we set the middle points $\ctrlptbspline_3,\dots,\ctrlptbspline_{\lambda-3}$ such that they are equally spaced between $\ctrlptbspline_2$ and $\ctrlptbspline_{\lambda-2}$.
This setting corresponds to an initial trajectory that will start at the given states and reach $p^{\text{term},\text{z}}$ with zero velocity and acceleration.
Next, we check the dynamic feasibility of this trajectory by computing the velocity and acceleration control points using equation (\ref{eq: bspline}). 
{\cb The lower-order control points are adjusted if the higher-order points exceed the bound.}
For example, given that the acceleration exceeds the bound, $\ctrlptbspline_l^{\lrangle{2}}>\maxop{a}^{\text{z}}$, we adjust the velocity and position control points
\begin{align}
    &\ctrlptbspline_{l+1}^{\lrangle{1}}=\ctrlptbspline_{l}^{\lrangle{1}}+\frac{T}{\trajorder-1}\maxop{a}^{\text{z}},\\
    &\ctrlptbspline_{l+2}=\ctrlptbspline_{l+1}+\frac{T}{\trajorder}\ctrlptbspline_{l+1}^{\lrangle{1}},
\end{align}
so that the updated acceleration is within bound $\ctrlptbspline_l^{\lrangle{2}}=\maxop{a}^{\text{z}}$. 
{\cb The output trajectory satisfies all dynamic constraints while ending at a state as close to $p^{\text{term},\text{z}}$ as possible.}
Finally, we convert the b-spline control points into polynomial coefficients using the basis matrices described in \cite{qin2000general}.

\end{appendices}

\bibliographystyle{ieeetr}        
\bibliography{main} 

\begin{IEEEbiography}[{\includegraphics[width=1in,height=1.25in,clip,keepaspectratio]{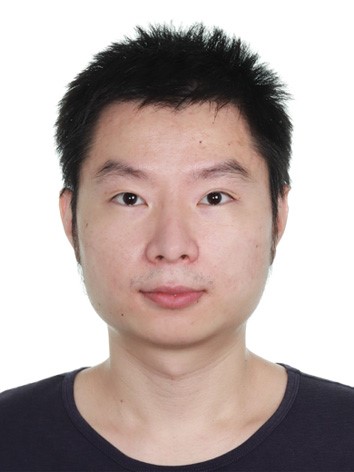}}]{Muqing Cao} received his Bachelor of Engineering (Honors) in Aerospace Engineering from Nanyang Technological University, Singapore. He is currently a Ph.D. candidate with the School of Electrical and Electronic Engineering, Nanyang Technological University, Singapore. 
He is also a research officer in Delta-NTU Corporate Laboratory for Cyber-Physical Systems, working on robot localization and navigation in environments with high human traffic.
His research interests include multi-robot systems, tethered robots, motion planning, modeling and control of aerial and ground robots.
\end{IEEEbiography}

\begin{IEEEbiography}[{\includegraphics[width=1in,height=1.25in,clip,keepaspectratio]{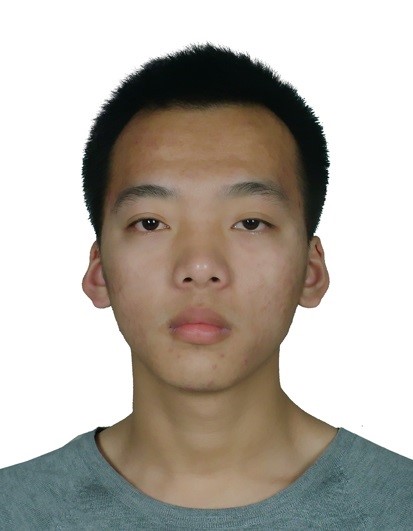}}]{Kun Cao} received the B.Eng. degree in Mechanical engineering from the Tianjin University, Tianjin, China, in 2016, and the Ph.D. degree in the School of Electrical and Electronic Engineering, Nanyang Technological University, Singapore, in 2021. He is currently the 2022 Wallenberg-NTU Presidential Postdoctoral Fellow with the latter school. His research interests include localization, formation control, distributed optimization, and soft robotics.
\end{IEEEbiography}

\begin{IEEEbiography}[{\includegraphics[width=1in,height=1.25in,clip,keepaspectratio]{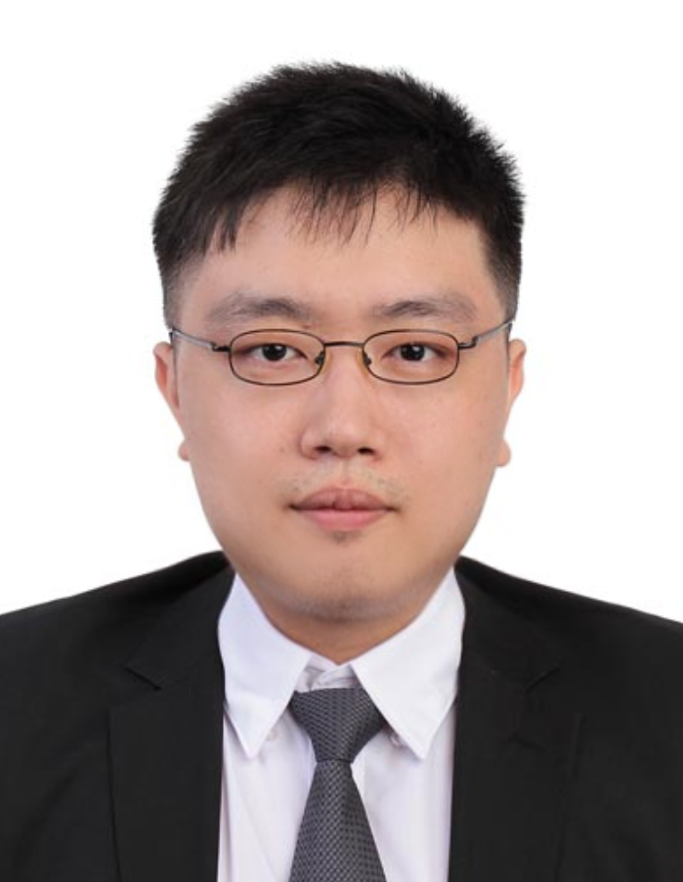}}]{Shenghai Yuan}  obtained his Bachelor's and Ph.D. degrees in electrical and electronic engineering from The Nanyang Technological University in Singapore in 2013 and 2019, respectively. He currently serves as a Research Fellow at the Centre for Advanced Robotics Technology Innovation (CARTIN) at Nanyang Technological University.

The main focus of his research lies in perception, sensor fusion, robust navigation, machine learning, and autonomous robotics systems. He has participated in various robotics competitions and obtained a championship win in the 2011 Taiwan UAV Reconnaissance Competition, and was the finalist in the 2012 DAPRA UAVforge Challenges. Additionally, he was awarded the NTU graduate scholarship in 2013. He has also authored six patents and technological disclosures. His research work has been published in over 40 international conferences and journals.
\end{IEEEbiography}

\begin{IEEEbiography}[{\includegraphics[width=1in,height=1.25in,clip,keepaspectratio]{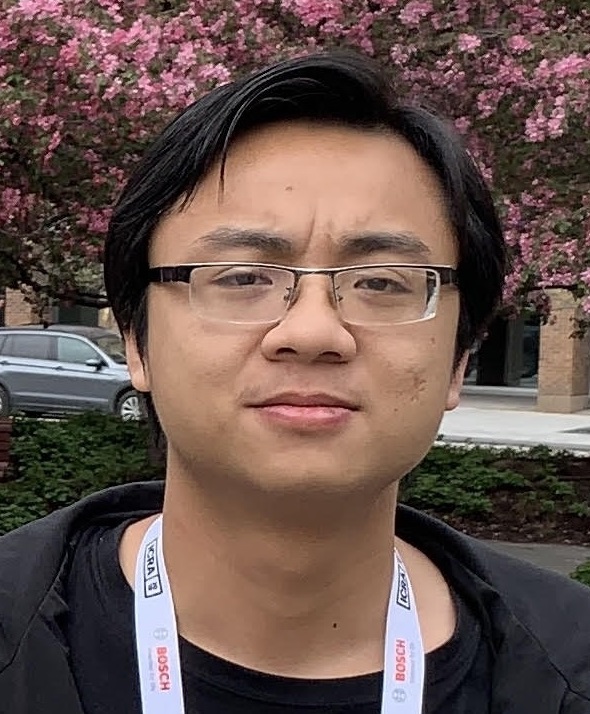}}]{Thien-Minh Nguyen}
received his B.E. (Honors) in Electrical and Electronic Engineering from Vietnam National University - Ho Chi Minh City in 2014, and Ph.D. degree from Nanyang Technological University (NTU) in 2020. He was a Research Fellow under STE-NTU Corporate Lab from Sep 2019 to Nov 2020. He is currently the 2020 Wallenberg-NTU Presidential Postdoctoral Fellow at the School of EEE, NTU, Singapore.

Dr. Nguyen's research interests include perception and control for autonomous robots, learning and adaptive systems, multi-robot systems. He received the NTU EEE Best Doctoral Thesis Award (Innovation Category) in 2020, 1st Prize for Vietnam Go Green in the City competition in 2013, and Intel Engineering Scholarship in 2012.
\end{IEEEbiography}

\begin{IEEEbiography}[{\includegraphics[width=1in,height=1.25in,clip,keepaspectratio]{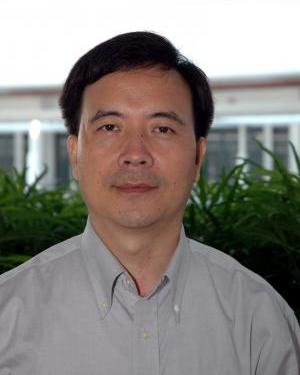}}]{Lihua Xie}
received the Ph.D. degree in electrical engineering from the University of Newcastle, Australia, in 1992. Since 1992, he has been with the School of Electrical and Electronic Engineering, Nanyang Technological University, Singapore, where he is currently a professor and Director, Delta-NTU Corporate Laboratory for Cyber-Physical Systems and Director, Center for Advanced Robotics Technology Innovation. He served as the Head of Division of Control and Instrumentation from July 2011 to June 2014. He held teaching appointments in the Department of Automatic Control, Nanjing University of Science and Technology from 1986 to 1989. 

Dr Xie’s research interests include robust control and estimation, networked control systems, multi-agent networks, localization and unmanned systems. He is an Editor-in-Chief for Unmanned Systems and has served as Editor of IET Book Series in Control and Associate Editor of a number of journals including IEEE Transactions on Automatic Control, Automatica, IEEE Transactions on Control Systems Technology, IEEE Transactions on Network Control Systems, and IEEE Transactions on Circuits and Systems-II. He was an IEEE Distinguished Lecturer (Jan 2012 – Dec 2014). Dr Xie is Fellow of Academy of Engineering Singapore, IEEE, IFAC, and CAA.
\end{IEEEbiography}
\end{document}